\documentclass[11pt]{article}
\usepackage[compress,numbers]{natbib}
\usepackage[left=1.25in, right=1.25in, top=1.25in, bottom=1.25in]{geometry}
\usepackage[scaled=0.92]{PTSans}

\usepackage{booktabs}
\usepackage[ruled]{algorithm2e}
\usepackage{soul}

% Math packages.
\usepackage{amsfonts}
\usepackage{amsmath}
\usepackage{amssymb}
\usepackage{bm}
\usepackage{caption}
\usepackage{subcaption}
\usepackage{amsthm}

%% ------------------------------------------------------------------------------ MATH

\newcommand{\bx}{\mathbf{x}}
\newcommand{\by}{\mathbf{y}}
\newcommand{\bz}{\mathbf{z}}
\newcommand{\cx}{\mathcal{X}}
\newcommand{\cy}{\mathcal{Y}}

\newcommand{\btheta}{\boldsymbol\theta}
\newcommand{\bphi}{\boldsymbol\phi}
\newcommand{\bpsi}{\boldsymbol\psi}

% Notation from VSMC.

\newcommand{\eg}{\textit{e.g.,}~}

\usepackage{graphicx}

\DeclareRobustCommand{\rchi}{{\mathpalette\irchi\relax}}
\newcommand{\irchi}[2]{\raisebox{.9\depth}{$#1\chi$}} % inner command, used by \rchi

\newcommand{\nasx}{NAS-${\rchi}$\kern-0.2pt}
\newcommand{\bnasx}{NAS-$\bm{\rchi}$\kern-0.5pt}

\newtheorem{thm}{Theorem}

\newtheorem{prop}[thm]{Proposition}

\newtheorem*{thm*}{Theorem}
\newtheorem*{prop*}{Proposition}

\newcommand\blfootnote[1]{%
  \begingroup
  \renewcommand\thefootnote{}\footnote{#1}%
  \addtocounter{footnote}{-1}%
  \endgroup
}

\usepackage[utf8]{inputenc} % allow utf-8 input
\usepackage[T1]{fontenc}    % use 8-bit T1 fonts
\usepackage{url}            % simple URL typesetting
\usepackage{booktabs}       % professional-quality tables
\usepackage{amsfonts}       % blackboard math symbols
\usepackage{nicefrac}       % compact symbols for 1/2, etc.
\usepackage{microtype}      % microtypography

\usepackage[table,usenames,dvipsnames]{xcolor}

\usepackage[colorlinks,linktoc=all]{hyperref}
\usepackage[all]{hypcap}
\hypersetup{citecolor=MidnightBlue}
\hypersetup{linkcolor=black}
\hypersetup{urlcolor=MidnightBlue}

\title{\nasx: Neural Adaptive Smoothing via Twisting}

\author{%
  Dieterich Lawson\textsuperscript{*,\textdagger}\\
  Google Research\\
  \texttt{dieterichl@google.com}
  \and
  Michael Y. Li\textsuperscript{*}\\
  Stanford University\\
  \texttt{michaelyli@stanford.edu}
  \and
  \vspace{.2em}\\
  Scott W. Linderman\\
  Stanford University\\
  \texttt{scott.linderman@stanford.edu}
}

\begin{document}

\maketitle
 
\begin{abstract}
\noindent
Sequential latent variable models (SLVMs) are essential tools in statistics and machine learning, with applications ranging from healthcare to neuroscience. As their flexibility increases, analytic inference and model learning can become challenging, necessitating approximate methods. Here we introduce neural adaptive smoothing via twisting (NAS-X), a method that extends reweighted wake-sleep (RWS) to the sequential setting by using smoothing sequential Monte Carlo (SMC) to estimate intractable posterior expectations. Combining RWS and smoothing SMC allows NAS-X to provide low-bias and low-variance gradient estimates, and fit both discrete and continuous latent variable models. We illustrate the theoretical advantages of NAS-X over previous methods and explore these advantages empirically in a variety of tasks, including a challenging application to mechanistic models of neuronal dynamics. These experiments show that NAS-X substantially outperforms previous VI- and RWS-based methods in inference and model learning, achieving lower parameter error and tighter likelihood bounds. 
\end{abstract}

\blfootnote{\textsuperscript{*} Equal contribution. \textsuperscript{\textdagger} Work performed while at Stanford University.}
\section{Introduction}

Sequential latent variable models (SLVMs) are a foundational model class in statistics and machine learning, propelled by the success of hidden Markov models \cite{baum1966statistical} and linear dynamical systems \cite{kalman1960new}. To model more complex data, SLVMs have incorporated nonlinear conditional dependencies, resulting in models such as sequential variational autoencoders~\citep{rezende2014stochastic,kingma2013auto,krishnan2015deep,chung2015recurrent,fraccaro2016sequential}, financial volatility models~\citep{chib2009multivariate}, and biophysical models of neural activity~\citep{hodgkin1952quantitative}. While these nonlinear dependencies make SLVMs more flexible, they also frustrate inference and model learning, motivating the search for approximate methods.

One popular method for inference in nonlinear SLVMs is sequential Monte Carlo (SMC), which provides a weighted particle approximation to the true posterior and an unbiased estimator of the marginal likelihood. 
For most SLVMs, SMC is a significant improvement over standard importance sampling, providing an estimator of the marginal likelihood with variance that grows linearly in the length of the sequence rather than exponentially \cite{doucet2009tutorial}. SMC's performance, however, depends on having a suitable proposal.
The optimal proposal is often intractable, so in practice, proposal parameters are learned from data.
Broadly, there are two approaches to proposal learning: variational inference \citep{blei2017variational,wainwright2008graphical} and reweighted wake-sleep \citep{hinton1995wake,bornschein2014reweighted}.

Variational inference (VI) methods for SLVMs optimize the model and proposal parameters by ascending a lower bound on the log marginal likelihood that can be estimated with SMC, an approach called variational SMC~\citep{le2017auto,naesseth2018variational,maddison2017filtering}. Recent advances in variational SMC have extended it to work with smoothing SMC \citep{lawson2018twisted,lawson2022sixo}, a crucial development for models where future observations are strongly related to previous model states. Without smoothing SMC, particle degeneracy can cause high variance of the lower bound and its gradients, resulting in unstable learning \citep{lawson2018twisted,doucet2009tutorial}.

Reweighted wake-sleep (RWS) methods instead use SMC's posterior approximation to directly estimate gradients of the log marginal likelihood~\citep{bornschein2014reweighted, gu2015neural}. This approach can be interpreted as descending the inclusive KL divergence from the true posterior to the proposal. Notably, RWS methods work with discrete latent variables, providing a compelling advantage over VI methods that typically resort to high-variance score function estimates for discrete latent variables.

In this work, we combine recent advances in smoothing variational SMC with the benefits of reweighted wake-sleep. To this end, we introduce neural adaptive smoothing via twisting (NAS-X), a method for inference and model learning in nonlinear SLVMs that uses smoothing SMC to approximate intractable posterior expectations. The result is a versatile, low-bias and low-variance estimator of the gradients of the log marginal likelihood suitable for fitting proposal and model parameters in both continuous and discrete SLVMs.

After introducing NAS-X, we present two theoretical results that highlight the advantages of NAS-X over other RWS-based methods. 
We also demonstrate NAS-X's performance empirically in model learning and inference in linear Gaussian state-space models, discrete latent variable models, and high-dimensional ODE-based mechanistic models of neural dynamics.
In all experiments, we find that NAS-X substantially outperforms several VI and RWS alternatives including ELBO, IWAE, FIVO, SIXO.
Furthermore, we empirically show that our method enjoys lower bias and lower variance gradients, requires minimal additional computational overhead, and is robust and easy to train.

\section{Background}
This work considers model learning and inference in nonlinear sequential latent variable models with Markovian structure; i.e. models that factor as
\begin{align}
    p_{\btheta} ( \bx_{1:T}, \by_{1:T} ) = p_{\btheta} ( \bx_1 ) p_{\btheta} ( \by_1 \mid \bx_1 ) \prod_{t=2}^T p_{\btheta} ( \bx_t \mid \bx_{t-1} ) p_{\btheta} ( \by_t \mid \bx_{t} ) \label{equ:ssm},
\end{align}
with latent variables~$\bx_{1:T} \in \cx^T$, observations~$\by_{1:T} \in \cy^T$, and global parameters~$\btheta \in \Theta$. 
By nonlinear, we mean latent variable models where the parameters of the conditional distributions~$p_{\btheta}(\bx_t \mid \bx_{t-1})$ and~$p_{\btheta}(\by_t \mid \bx_t)$ depend nonlinearly on~$\bx_{t-1}$ and~$\bx_t$, respectively. 

Estimating the marginal likelihood~$p_\theta(\by_{1:T})$ and posterior~$p_\theta(\bx_{1:T} \mid \by_{1:T})$ for this model class is difficult because it requires computing an intractable integral over the latents,
\begin{align*}
    p_{\btheta}(\by_{1:T}) = \int_{\cx^{T}}  p_{\btheta} ( \by_{1:T}, \bx_{1:T} ) \, \mathrm{d} \bx_{1:T}, 
\end{align*}
We begin by introducing two algorithms, reweighted wake-sleep~\citep{bornschein2014reweighted,hinton1995wake} and smoothing sequential Monte Carlo~\citep{lawson2022sixo}, that are crucial for understanding our approach.

\subsection{Reweighted Wake-Sleep}
\label{sec:rws}
Reweighted wake-sleep (RWS, \cite{hinton1995wake,bornschein2014reweighted}) is a method for maximum marginal likelihood in LVMs that estimates the gradients of the marginal likelihood using self-normalized importance sampling. This is motivated by Fisher's identity, which allows us to write the gradients of the marginal likelihood as a posterior expectation,
\begin{align}
    \nabla_\theta \log p_\theta(\by_{1:T}) = \mathbb{E}_{p_\theta(\bx_{1:T} \mid \by_{1:T})}\left[\nabla_{\theta} \log p_\theta(\bx_{1:T}, \by_{1:T}) \right], \label{eq:fishers}
\end{align}
as proved in Appendix \ref{sec:app:marginal-likelihood-grad}. The term inside the expectation is computable with modern automatic differentiation tools \cite{jax2018github,tensorflow2015-whitepaper}, but the posterior $p_\theta(\bx_{1:T} \mid \by_{1:T})$ is unavailable. Thus, SNIS is used to form a biased but consistent Monte Carlo estimate of Eq. (\ref{eq:fishers}) \cite{owen2013monte}. Specifically, SNIS draws $N$ IID samples from a proposal distribution, $q_\phi(\bx_{1:T} \mid \by_{1:T})$ and weights them to form the estimator
\begin{align}
\sum_{i=1}^N \overline{w}^{(i)} \nabla_\theta \log p_\theta(\bx_{1:T}^{(i)}, \by_{1:T}), \quad \bx_{1:T}^{(i)} \sim q_\phi(\bx_{1:T} \mid \by_{1:T}), \quad \overline{w}^{(i)} \propto \frac{p_\theta(\bx_{1:T}^{(i)}, \by_{1:T})}{q_\phi(\bx_{1:T}^{(i)} \mid \by_{1:T})}\label{eq:rws-estimator}
\end{align}
where~$\overline{w}^{(i)}$ are normalized weights, i.e.~$\sum_{i=1}^N\overline{w}^{(i)} = 1$.

The variance of this estimator is reduced as $q_\phi$ approaches the posterior \cite{owen2013monte}, so RWS also updates $q_\phi$ by minimizing the inclusive Kullback-Leibler (KL) divergence from the posterior to the proposal. Crucially, the gradient for this step can also be written as the posterior expectation
\begin{align}
\nabla_\phi \mathrm{KL}(p_\theta(\bx_{1:T} \mid \by_{1:T}) \mid \mid q_\phi(\bx_{1:T} \mid \by_{1:T})) = -\mathbb{E}_{p_\theta(\bx_{1:T} \mid \by_{1:T})}\left[ \nabla_\phi \log q_\phi (\bx_{1:T} \mid \by_{1:T}) \right],
\label{eq:inclusive_kl_eq}
\end{align}
as derived in Appendix \ref{sec:app:inclusive-kl-grad}. This allows RWS to estimate Eq. (\ref{eq:inclusive_kl_eq}) using SNIS with the same set of samples and weights as Eq. (\ref{eq:rws-estimator}),
\begin{align}
-\mathbb{E}_{p_\theta(\bx_{1:T} \mid \by_{1:T})}\left[ \nabla_\phi \log q_\phi (\bx_{1:T} \mid \by_{1:T}) \right] \approx -\sum_{i=1}^N \overline{w}^{(i)} \nabla_\phi \log q_\phi(\bx_{1:T}^{(i)} \mid \by_{1:T}).
\label{eq:inclusive_kl_estimator}
\end{align}
Importantly, any method that provides estimates of expectations w.r.t. the posterior can be used for gradient estimation within the RWS framework, as we will see in the next section.

\subsection{Estimating Posterior Expectations with Smoothing Sequential Monte Carlo} 
As we saw in Eqs.~(\ref{eq:fishers}) and~(\ref{eq:inclusive_kl_eq}), key quantities in RWS can be expressed as expectations under the posterior. Standard RWS uses SNIS to approximate these expectations, but in sequence models the variance of the SNIS estimator can scale exponentially in the sequence length. In this section, we review sequential Monte Carlo (SMC)~\citep{doucet2009tutorial, naesseth2019elements}, an inference algorithm that can produce estimators of posterior expectations with linear or even sub-linear variance scaling.

SMC approximates the posterior~$p_{\btheta}(\bx_{1:T}\mid\by_{1:T})$ with a set of~$N$ weighted particles~$\bx_{1:T}^{1:N}$ constructed by sampling from a sequence of target distributions~$\{\pi_t(\bx_{1:t})\}_{t=1}^T$.
Since these intermediate targets are often only known up to some unknown normalizing constant~$Z_t$, SMC uses the unnormalized targets~$\{\gamma_t(\bx_{1:t})\}_{t=1}^T$, where~$\pi_t(\bx_{1:t}) = \gamma_t(\bx_{1:t})/Z_t$. 
Provided mild technical conditions are met and~${\gamma_T(\bx_{1:T}) \propto p_{\btheta}(\bx_{1:T}, \by_{1:T})}$, SMC returns weighted particles that approximate the posterior~$p_{\btheta}(\bx_{1:T} \mid \by_{1:T})$
~\citep{doucet2009tutorial, naesseth2019elements}. 
These weighted particles can be used to compute biased but consistent estimates of expectations under the posterior, similar to SNIS.

SMC repeats the following steps for each time $t$:
\begin{enumerate}
    \item Sample latents~$\bx_{1:t}^{1:N}$ from a proposal distribution~$q_\phi(\bx_{1:t} \mid \by_{1:T})$.
    \item Weight each particle using the unnormalized target~$\gamma_t$ to form an empirical approximation $\hat{\pi}_t$ to the normalized target distribution~$\pi_t$. 
    \item Draw new particles~$\bx_{1:t}^{1:N}$ from the approximation $\hat{\pi}_t$ (the resampling step).
\end{enumerate}
By resampling away latent trajectories with low weight and focusing on promising particles, SMC can produce lower variance estimates than SNIS. For a thorough review of SMC, see  \citet{doucet2009tutorial},~\citet{naesseth2019elements}, and~\citet{del2004feynman}.

\paragraph{Filtering vs.~Smoothing} The most common choice of unnormalized targets $\gamma_t$ are the \emph{filtering} distributions~${p_{\btheta}(\bx_{1:t}, \by_{1:t})}$, resulting in the algorithm known as filtering SMC or a particle filter.
Filtering SMC has been used to estimate posterior expectations within the RWS framework in neural adaptive sequential Monte Carlo (NASMC)~\citep{gu2015neural}, but a major disadvantage of filtering SMC is that it ignores future observations~$\by_{t+1:T}$.
Ignoring future observations can lead to particle degeneracy and high-variance estimates, which in turn causes poor model learning and inference~\citep{maddison2017filtering,lawson2018twisted,whiteley2014twisted,briers2010smoothing}.

We could avoid these issues by using the \emph{smoothing} distributions as unnormalized targets, choosing ${\gamma_t(\bx_{1:t}) = p_{\btheta}(\bx_{1:t}, \by_{1:T})}$, but unfortunately the smoothing distributions are not readily available from the model. We can approximate them, however, by observing that~$p_{\btheta}(\bx_{1:t}, \by_{1:T})$ is proportional to the filtering distributions~$p_{\btheta}(\bx_{1:t}, \by_{1:t})$ times the \textit{lookahead} distributions~$p_{\btheta}(\by_{t+1:T} \mid \bx_t)$. 
If the lookahead distributions are well-approximated by a sequence of \textit{twists}~$\{r_\psi(\by_{t+1:T}, \bx_t)\}_{t=1}^T$, then running SMC with targets~$\gamma_t(\bx_{1:t}) = p_{\btheta}(\bx_{1:t}, \by_{1:t}) \, r_\psi(\by_{t+1:T}, \bx_t)$ approximates smoothing SMC~\citep{whiteley2014twisted}.

\paragraph{Twist Learning} 
\label{sec:twist-learning}
We have reduced the challenge of obtaining the smoothing distributions to learning twists that approximate the lookahead distributions. Previous twist-learning approaches include maximum likelihood training on samples from the model \cite{lawson2018twisted,lin2013lookahead} and Bellman-type losses motivated by writing the twist at time $t$ recursively in terms of the twist at time $t+1$ \cite{lawson2018twisted,lioutas2022critic}. For
NAS-X we use density ratio estimation (DRE) via classification to learn the twists, as introduced in \citet{lawson2022sixo}. This method is motivated by observing that the lookahead distribution is proportional to a ratio of densities up to a constant independent of $x_t$,
\begin{align}
p_{\btheta}(\by_{t+1:T} \mid \bx_t) = \frac{p_{\btheta}(\bx_t \mid \by_{t+1:T}) \, p_{\btheta}(\by_{t+1:T})}{p_{\btheta}(\bx_t)} \propto \frac{p_{\btheta}(\bx_t \mid \by_{t+1:T})}{p_{\btheta}(\bx_t)}\label{equ:methods:bwd_ratio}.
\end{align}
Results from the DRE via classification literature \cite{sugiyama2012density} provide a way to approximate this density ratio: train a classifier to distinguish between samples from the numerator $p_{\btheta}(\bx_t \mid \by_{t+1:T})$ and denominator $p_{\btheta}(\bx_t)$. Then, the pre-sigmoid output of the classifier will approximate the log of the ratio in Eq. (\ref{equ:methods:bwd_ratio}). For an intuitive argument for this fact see Appendix \ref{sec:app:dre}, and for a full proof see \citet{sugiyama2012density}.

In practice, it is not possible to sample directly from $p_{\btheta}(\bx_t \mid \by_{t+1:T})$. Instead, \citet{lawson2022sixo} sample full trajectories from the model's joint distribution, i.e. draw $\bx_{1:T}, \by_{1:T} \sim p_{\btheta}(\bx_{1:T}, \by_{1:T})$, and discard unneeded timesteps, leaving only $\bx_t$ and $\by_{t+1:T}$ which are distributed marginally as $p_{\btheta}(\bx_{t}, \by_{t+1:T})$. Training the DRE classifier on data sampled in this manner will approximate the ratio $p_{\btheta}(\bx_{t}, \by_{t+1:T}) / p_{\btheta}(\bx_{t})p_{\btheta}(\by_{t+1:T})$, which is equivalent to Eq. (\ref{equ:methods:bwd_ratio}), see Appendix \ref{sec:app:dre}.
\section{NAS-X: Neural Adaptive Smoothing via Twisting}
\label{sec:methods}

The goal of NAS-X is to combine recent advances in smoothing SMC with the advantages of reweighted wake-sleep. Because SMC is a self-normalized importance sampling algorithm, it can be used to estimate posterior expectations and therefore the model and proposal gradients within a reweighted wake-sleep framework. In particular, NAS-X repeats the following steps:
\begin{enumerate}
    \item Draw a set of $N$ trajectories $\bx_{1:T}^{(1:N)}$ and weights $\overline{w}_{1:T}^{(1:N)}$ from a smoothing SMC run with model $p_\theta$, proposal $q_\phi$, and twist $r_\psi$.
    \item Use those trajectories and weights to form estimates of gradients for the model $p_\theta$ and proposal $q_\phi$, as in reweighted wake-sleep. Specifically, NAS-X computes the gradients of the inclusive KL divergence for learning the proposal $q_\phi$ as
    \begin{align}
     -\sum_{t=1}^T\sum_{i=1}^N\overline{w}_t^{(i)}\nabla_\phi \log q_\phi(\bx_{t}^{(i)} \mid \bx_{t-1}^{(i)}, \by_{t:T}) \label{eq:nasx-prop-grads}
    \end{align}
    and computes the gradients of the model $p_\theta$ as
    \begin{align}
    \sum_{t=1}^T\sum_{i=1}^N \overline{w}_t^{(i)} \nabla_\theta \log p_\theta(\bx_{t}^{(i)}, \by_{t} \mid \bx_{t-1}^{(i)}).\label{eq:nasx-model-grads}
    \end{align}
    \item Update the twists $r_\psi$ using density ratio estimation via classification.
\end{enumerate}
A full description is available in Algorithms \ref{alg:nasx} and \ref{alg:twist-train}.

A key design decision in NAS-X is the specific form of the gradient estimators. Smoothing SMC provides two ways to estimate expectations of test functions with respect to the posterior: both the timestep-$t$ and timestep-$T$ approximations of the target distribution could be used, in the latter case by discarding timesteps after $t$. Specifically,
\begin{align}
    p_\theta(\bx_{1:t} \mid \by_{1:T}) \approx \sum_{i=1}^N \overline{w}_t^{(i)} \delta(\bx_{1:t}\,;\,\bx_{1:t}^{(i)}) \approx \sum_{i=1}^N \overline{w}_T^{(i)} \delta(\bx_{1:t}\,;\,(\bx_{1:T}^{(i)})_{1:t}) \label{eq:alternative-grad-estimators}
\end{align}
where $\delta(a \,;\, b)$ is a Dirac delta of $a$ located at $b$ and $(\bx_{1:T})_{1:t}$ denotes selecting the first $t$ timesteps of a timestep-$T$ particle; due to SMC's ancestral resampling step these are not in general equivalent. 
For NAS-X we choose the time-$t$ approximation of the posterior to lessen particle degeneracy, as in NASMC \cite{gu2015neural}. Note, however, that in the case of NASMC this amounts to approximating the posterior with the filtering distributions, which ignores information from future observations. In the case of NAS-X, the intermediate distributions directly approximate the posterior distributions because of the twists, a key advantage that we explore theoretically in Section \ref{sec:methods:theory} and empirically in Section \ref{sec:exp}.

\begin{algorithm}
\caption{NAS-X}
\label{alg:nasx}
  \SetKwProg{myproc}{Procedure}{}{}
  \SetKwFunction{nasx}{NAS-X}
  \SetKwFunction{twist}{twist-training}
  \SetKwFunction{model}{model-training}
  \SetKwFunction{opt}{grad-step}  
  \myproc{\nasx{$\theta_0$, $\phi_0$, $\psi_0$, $\by_{1:T}$}}{
  $\theta \gets \theta_0, \quad \phi \gets \phi_0, \quad \psi \gets \psi_0$\\
  \While{not converged}{
$\bx_{1:T}^{1:N}, \overline{w}_{1:T}^{1:N} \gets$ SMC$(\{p_{\theta_{}}(\bx_{1:t}, \by_{1:t}), q_\phi(\bx_{t} \mid \bx_{t-1}, \by_{t:T}), r_{\psi}(\bx_t, \by_{t+1:T})\}_{t=1}^T)$\\
$\Delta\theta = \sum_{t=1}^T\sum_{i=1}^N \overline{w}_t^{(i)} \nabla_\theta \log p_\theta(\bx_{t}^{(i)}, \by_{t} \mid \bx_{t-1}^{(i)})$\\
$\Delta\phi 
= -\sum_{t=1}^T\sum_{i=1}^N\overline{w}_t^{(i)}\nabla_\phi \log q_\phi(\bx_{t}^{(i)} \mid \bx_{t-1}^{(i)}, \by_{t:T})$\\
$\theta \gets \opt(\theta, \Delta\theta)$ \\
$\phi \gets \opt(\phi, \Delta\phi)$\\
$\psi \gets$ \twist{$\theta$, $\psi$}
}
   }{}
\KwRet $\theta$, $\phi$, $\psi$

\myproc{\twist{$\theta$, $\psi_0$}}{
See Algorithm \ref{alg:twist-train} in Appendix \ref{sec:app:dre}.
}{}
\end{algorithm} 

\subsection{Theoretical Analysis of NAS-X}
\label{sec:methods:theory}
In this section, we state two theoretical results that illustrate NAS-X’s advantages over NASMC, with proofs given in Appendix \ref{apd:theory}.
\begin{prop}
\label{prop:consistency} 
{\normalfont Consistency of NAS-X's gradient estimates.} 
Suppose the twists are optimal so that  $r_{\bpsi}(\by_{t+1:T}, \bx_t) \propto p(\by_{t+1:T} \mid \bx_t)$ up to a constant independent of $\bx_t$ \ for \ $t=1,\ldots, T-1$.
Let $\hat{\nabla}_{\theta} \log p_{\theta}(\by_{1:T})$ be NAS-X's weighted particle approximation to the true gradient of the log marginal likelihood $\nabla_{\theta} \log p_{\theta}(\by_{1:T})$. Then $\hat{\nabla}_{\theta} \log p_{\theta}(\by_{1:T}) \overset{a.s.}{\to} \nabla_{\theta} \log p_{\theta}(\by_{1:T})$ as $N \rightarrow \infty$. 
\end{prop}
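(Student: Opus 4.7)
\textbf{Proof plan for Proposition~\ref{prop:consistency}.} The plan is to decompose the true gradient into a sum of per-timestep posterior expectations using Fisher's identity, show that under optimal twists each intermediate SMC target coincides with the corresponding marginal smoothing distribution, and then invoke standard SMC consistency results term-by-term.

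First, I would apply Fisher's identity (Eq.~\ref{eq:fishers}) together with the Markov factorization~(\ref{equ:ssm}) of the joint to write
\begin{align*}
\nabla_\theta \log p_\theta(\by_{1:T})
= \sum_{t=1}^T \mathbb{E}_{p_\theta(\bx_{1:T} \mid \by_{1:T})}\bigl[\nabla_\theta \log p_\theta(\bx_t, \by_t \mid \bx_{t-1})\bigr],
\end{align*}
with the convention that $\bx_0$ is null. Because the $t$th summand depends on the trajectory only through $(\bx_{t-1}, \bx_t)$, the expectation reduces to one against the marginal smoothing distribution $p_\theta(\bx_{1:t} \mid \by_{1:T})$. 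This rewrites the target gradient as a finite sum of expectations, each of which involves only the first $t$ latents.

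Next, I would verify that when the twists are optimal, the $t$th intermediate SMC target distribution coincides with $p_\theta(\bx_{1:t} \mid \by_{1:T})$. By construction, NAS-X uses unnormalized targets $\gamma_t(\bx_{1:t}) = p_\theta(\bx_{1:t}, \by_{1:t})\, r_\psi(\by_{t+1:T}, \bx_t)$; under the optimality hypothesis $r_\psi(\by_{t+1:T}, \bx_t) \propto p_\theta(\by_{t+1:T} \mid \bx_t)$ up to a constant in $\bx_t$, so $\gamma_t(\bx_{1:t}) \propto p_\theta(\bx_{1:t}, \by_{1:T})$ and hence the normalized target is exactly the marginal smoothing distribution $\pi_t(\bx_{1:t}) = p_\theta(\bx_{1:t} \mid \by_{1:T})$. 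For $t=T$ the twist is unused and $\gamma_T \propto p_\theta(\bx_{1:T}, \by_{1:T})$ trivially gives $\pi_T = p_\theta(\bx_{1:T} \mid \by_{1:T})$.

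Finally, I would appeal to the standard almost-sure consistency of SMC weighted-particle estimators for each fixed $t$ (see, e.g., Del Moral 2004 or the treatment in Naesseth et al.~2019), applied to the bounded or integrable test function $f_t(\bx_{t-1:t}) = \nabla_\theta \log p_\theta(\bx_t, \by_t \mid \bx_{t-1})$. This yields
\begin{align*}
\sum_{i=1}^N \overline{w}_t^{(i)} \nabla_\theta \log p_\theta(\bx_t^{(i)}, \by_t \mid \bx_{t-1}^{(i)})
\;\overset{a.s.}{\longrightarrow}\;
\mathbb{E}_{p_\theta(\bx_{t-1:t}\mid\by_{1:T})}\bigl[\nabla_\theta \log p_\theta(\bx_t, \by_t \mid \bx_{t-1})\bigr]
\end{align*}
for each $t$ as $N \to \infty$. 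Summing the $T$ almost-sure limits (a finite sum preserves a.s.\ convergence) then recovers the Fisher-identity decomposition from the first step, completing the argument.

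The main technical obstacle is invoking SMC consistency cleanly: the intermediate particle sets and weights at different times $t$ are statistically dependent through the common SMC run, so one cannot treat them as independent estimators. However, consistency of SMC is established jointly across time, and a mild integrability assumption on $\nabla_\theta \log p_\theta(\bx_t, \by_t \mid \bx_{t-1})$ under $\pi_t$ (which I would state as a regularity condition) is sufficient to apply the standard result at each $t$ on the same probability space; the finite sum then converges almost surely. All other steps are algebraic identities.
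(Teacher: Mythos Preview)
Your proposal is correct and follows essentially the same approach as the paper: identify that under optimal twists the normalized intermediate target $\pi_t$ equals the marginal smoothing distribution $p_\theta(\bx_{1:t}\mid\by_{1:T})$, then invoke the Del Moral SMC consistency theorem for each $t$ and sum. The paper's proof is terser---it simply cites Theorem~7.4.3 of Del Moral and the target identification---while you spell out the Fisher-identity decomposition and the per-$t$ argument explicitly; but the logical content is the same.
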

\begin{prop} {\normalfont Unbiasedness of NAS-X's gradient estimates.} 
\label{prop:bias} 
Assume that proposal distribution $q_{\bphi}(\bx_t \mid \bx_{1:t-1}, \by_{1:T})$ is optimal so that $q_{\bphi}(\bx_t \mid \bx_{1:t-1}, \by_{1:T}) = p(\bx_t \mid \bx_{1:t-1}, \by_{1:T})$ \ for \ $t=1,\ldots, T$,
and the twists $r_{\bpsi}(\by_{t+1:T}, \bx_t)$ are optimal so that $r_{\bpsi}(\by_{t+1:T}, \bx_t) \propto p(\by_{t+1:T} \mid \bx_t)$ up to a constant independent of $\bx_t$ \ for \ $t=1,\ldots, T-1$. 
Let $\hat{\nabla}_{\theta} \log p_{\theta}(\by_{1:T})$ be NAS-X's weighted particle approximation to the true gradient of the log marginal likelihood, $\nabla_{\theta} \log p_{\theta}(\by_{1:T})$. Then, for any number of particles, $\mathbb{E}[\hat{\nabla}_{\theta} \log p_{\theta}(\by_{1:T})] = \log p_{\theta}(\by_{1:T})$. 
\end{prop}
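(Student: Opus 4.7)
The plan is to show that, under the stated optimality of the proposal and twist, the SMC weights degenerate to a uniform distribution and each particle trajectory is marginally distributed as the true smoothing posterior. Once these two facts are in place, the gradient estimator (Eq.~\ref{eq:nasx-model-grads}) reduces to a standard Monte Carlo estimate of Fisher's identity (Eq.~\ref{eq:fishers}) and is therefore unbiased (I read the conclusion as $\E[\hat\nabla_\theta \log p_\theta(\by_{1:T})] = \nabla_\theta \log p_\theta(\by_{1:T})$, since otherwise the statement is dimensionally inconsistent).

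The first step is to write out the incremental importance weight at step $t$ of twisted SMC, which is proportional to
\[
\frac{p_\theta(\bx_t \mid \bx_{t-1})\, p_\theta(\by_t \mid \bx_t)\, r_\psi(\by_{t+1:T},\bx_t)}{q_\phi(\bx_t \mid \bx_{t-1},\by_{t:T})\, r_\psi(\by_{t:T},\bx_{t-1})}.
\]
Substituting $r_\psi(\by_{t+1:T},\bx_t) = c_{t+1}\, p_\theta(\by_{t+1:T} \mid \bx_t)$ (with $c_{t+1}$ independent of $\bx_t$) and $q_\phi(\bx_t \mid \bx_{t-1},\by_{t:T}) = p_\theta(\bx_t \mid \bx_{t-1},\by_{t:T})$, and expanding the optimal proposal by Bayes' rule as $p_\theta(\bx_t \mid \bx_{t-1}) p_\theta(\by_t \mid \bx_t) p_\theta(\by_{t+1:T} \mid \bx_t) / p_\theta(\by_{t:T} \mid \bx_{t-1})$, every factor depending on $\bx_t$ cancels. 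The remaining quantity is independent of the particle index, so $\overline{w}_t^{(i)} = 1/N$ for all $t,i$.

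Second, I would argue by induction on $t$ that each particle trajectory $\bx_{1:t}^{(i)}$ is marginally distributed as $p_\theta(\bx_{1:t} \mid \by_{1:T})$. At $t=1$ the proposal draws from $p_\theta(\bx_1 \mid \by_{1:T})$ directly. For the inductive step, uniform-weight resampling preserves the marginal law of each ancestor, and the optimal proposal then draws $\bx_t \sim p_\theta(\bx_t \mid \bx_{t-1},\by_{t:T})$, which by the Markov structure of the SSM equals $p_\theta(\bx_t \mid \bx_{1:t-1},\by_{1:T})$. Multiplying by the inductive hypothesis yields the posterior law on $\bx_{1:t}$. Resampling induces dependencies across particles, but only the marginals are needed here, so those dependencies are harmless.

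Combining the two facts, the estimator collapses to
\[
\hat\nabla_\theta \log p_\theta(\by_{1:T}) \;=\; \frac{1}{N}\sum_{i=1}^N \sum_{t=1}^T \nabla_\theta \log p_\theta(\bx_t^{(i)}, \by_t \mid \bx_{t-1}^{(i)}) \;=\; \frac{1}{N}\sum_{i=1}^N \nabla_\theta \log p_\theta(\bx_{1:T}^{(i)}, \by_{1:T}),
\]
so taking the expectation, pulling in the marginal posterior law of each $\bx_{1:T}^{(i)}$, and applying Fisher's identity (Eq.~\ref{eq:fishers}) gives $\nabla_\theta \log p_\theta(\by_{1:T})$ exactly, for any $N$. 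The main obstacle is purely bookkeeping: tracking the constants $c_t$ in the twists carefully enough to see that they cancel across adjacent time steps rather than contaminating the weights, and articulating that resampling under uniform weights preserves the marginal law of each particle even though it correlates them. Everything else is substitution and Fisher's identity.
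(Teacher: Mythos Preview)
Your proposal is correct and follows essentially the same route as the paper's proof: both establish by induction on $t$ that (i) the incremental weights are independent of the particle index (hence the normalized weights are uniform) and (ii) each particle trajectory is marginally distributed as $p_\theta(\bx_{1:t}\mid\by_{1:T})$, then conclude via Fisher's identity. Your writeup is arguably slightly more careful than the paper's in two places: you explicitly track the proportionality constants $c_t$ in the twists (the paper silently replaces $r_\psi$ by $p(\by_{t+1:T}\mid\bx_t)$), and you flag that resampling correlates particles but only marginals are needed for unbiasedness, a point the paper leaves implicit.
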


\section{Related Work}

\paragraph{VI Methods}
There is a large literature on model learning via stochastic gradient ascent on an evidence lower bound (ELBO)~\citep{ranganath2013black,hoffman2013stochastic,kingma2013auto, mnih2016variational}. 
Subsequent works have considered ELBOs defined by the normalizing constant estimates from multiple importance sampling \cite{burda2015importance}, nested importance sampling, \cite{naesseth2015nested,zimmermann2021nested}, rejection sampling, and Hamiltonian Monte Carlo \cite{lawson2019energy}.
Most relevant to our work is the literature that uses SMC's estimates of the normalizing constant as a surrogate objective. 
There are a number of VI  methods based on filtering \cite{maddison2017filtering,naesseth2018variational,le2017auto} and smoothing SMC~\citep{lawson2018twisted,moretti2020variational,moretti2019smoothing, lawson2022sixo, briers2010smoothing}, but filtering SMC approaches can suffer from particle degeneracy and high variance~\citep{lawson2018twisted,lawson2022sixo}. 

\paragraph{Reweighted Wake-Sleep Methods}
The wake-sleep algorithm was introduced in \citet{hinton1995wake} as a way to train deep directed graphical models.
\citet{bornschein2014reweighted} interpreted the wake-sleep algorithm as self-normalized importance sampling and proposed reweighted wake-sleep, which uses SNIS to approximate gradients of the inclusive KL divergence and log marginal likelihood.
Neural adaptive sequential Monte Carlo (NASMC) extends RWS by using filtering SMC to approximate posterior expectations instead of SNIS~\citep{gu2015neural}. To combat particle degeneracy, NASMC approximates the posterior with the filtering distributions, which introduces bias.

\paragraph{NAS-X vs. SIXO} Both NAS-X and SIXO \cite{lawson2022sixo} leverage smoothing SMC with DRE-learned twists, but NAS-X uses smoothing SMC to estimate gradients in an RWS-like framework while SIXO uses it within a VI-like framework. Thus, NAS-X follows biased but consistent estimates of the log marginal likelihood while SIXO follows unbiased estimates of a lower bound on the log marginal likelihood. It is not clear a-priori which approach would perform better, but we provide empirical evidence in Section \ref{sec:exp} that shows NAS-X is more stable than SIXO and learns better models and proposals. In addition to these empirical advantages, NAS-X can fit discrete latent variable models while SIXO would require high-variance score function estimates.
\section{Experiments}
\label{sec:exp}
We empirically validate the following advantages of NAS-X:
\begin{itemize}
    \item By using the approximate smoothing distributions as targets for proposal learning, NAS-X can learn proposals that match the true posterior marginals, while NASMC and other baseline methods cannot. We illustrate this in Section~\ref{sec:exp:lgssm}, in a setting where the true posterior is tractable. 
    We illustrate the practical benefits on inference in nonlinear mechanistic models in Section~\ref{sec:exp:hh}.
    \item By optimizing the proposal within the RWS framework (\eg descending the inclusive KL), NAS-X can perform learning and inference in discrete latent variable models, which SIXO cannot. We explore this in Section~\ref{sec:exp:nascar}.  
    \item 
    We explore the practical benefits of this combination in a challenging setting in Section~\ref{sec:exp:hh}, where we show
    NAS-X can fit ODE-based mechanistic models of neural dynamics with 38 model parameters and an 18-dimensional latent state.
\end{itemize}
In addition to these experiments, we analyze the computational complexity and wall-clock speed of each method and the bias and variance of the gradient estimates in Sections~\ref{app:sec:speed} and~\ref{sec:app-gradients-analysis} of the Appendix.
\subsection{Linear Gaussian State Space Model}
\label{sec:exp:lgssm}
 We first consider a one-dimensional linear Gaussian state space model with joint distribution
\begin{align}
    p(\bx_{1:T}, \by_{1:T}) = \mathcal{N}(\bx_1 ; 0, \sigma_x^2) \prod_{t=2}^{T} \mathcal{N}(\bx_{t+1} ; \bx_t, \sigma_x^2) \prod_{t=1}^{T} \mathcal{N}(\by_{t} ; \bx_t, \sigma_y^2).
    \label{eq:lg_ssm}
\end{align}

\begin{figure*}
\centering
\includegraphics[scale=0.32]{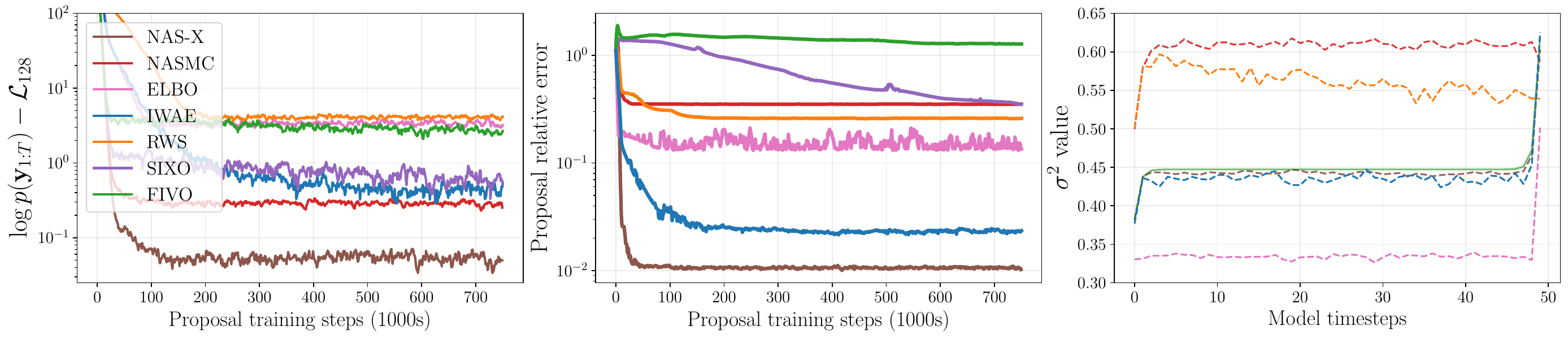}
\caption{\textbf{Comparison of NAS-X and baseline methods on inference in LG-SSM.}
(\textbf{left}) Comparison of log-marginal likelihood bounds (lower is better), (\textbf{middle}) proposal parameter error (lower is better),
and (\textbf{right}) learned proposal variances. 
NAS-X outperforms all baseline methods and recovers the true posterior marginals. }
\label{fig:exp:lgssm_baselines}
\vspace{-5mm}
\end{figure*}

In Figure~\ref{fig:exp:lgssm_baselines}, we compare NAS-X against several baselines (NASMC, FIVO, SIXO, RWS, IWAE, and ELBO) by evaluating log marginal likelihood estimates (left panel) and recovery of the true posterior (middle and right panels). 
For all methods we learn a mean-field Gaussian proposal factored over time, $q(\bx_{1:T}) = \prod_{t=1}^T q_t(\bx_t) = \prod_{t=1}^T \mathcal{N}(\bx_t; \mu_t, \sigma^2_t)$,
with parameters $\mu_{1:T}$ and $\sigma^2_{1:T}$ corresponding to the means and variances at each time-step. 
For twist-based methods, we parameterize the twist as a quadratic function in $\bx_t$ whose coefficients are functions of the observations and time step. We chose this form to match the functional form of the analytic log density ratio. 
For details, see Section~\ref{sec:app-lssm} in the Appendix.
NAS-X outperforms all baseline methods, achieving a tighter lower bound on the log-marginal likelihood and lower parameter error.

In the right panel of Figure~\ref{fig:exp:lgssm_baselines}, we compare the learned proposal variances against the true posterior variance, which can be computed in closed form.
See Section~\ref{sec:app-lssm} for comparison of proposal means; 
we do not report this comparison in the main text since all methods recover the posterior mean.
This comparison gives insight into NAS-X's better performance. 
NASMC's learned proposal overestimates the posterior variance and fails to capture the true posterior distribution, because it employs a filtering approximation to the gradients of the proposal distribution. 
On the other hand, by using the twisted targets, which approximate the smoothing distributions, to estimate proposal gradients, NAS-X recovers the true posterior.

\subsection{Switching Linear Dynamical Systems}
\label{sec:exp:nascar}
To explore NAS-X's ability to handle discrete latent variables, we consider a switching linear dynamical system (SLDS) model~\citep{foxnonparametricbayes, linderman2017recurrent}. Specifically, we adapt the recurrent SLDS example from~\citet{linderman2017recurrent} in which the latent dynamics trace ovals in a manner that resembles cars racing on a NASCAR track.
There are two coupled sets of latent variables: a discrete state $\bz_t$, with $K=4$ possible values, and a two-dimensional continuous state $\bx_t$ that follows linear dynamics that depend on $\bz_t$. The observations are a noisy projection of $\bx_t$ into a ten-dimensional observation space. 
There are 1000 observations in total.
For the proposal, we factor $q$ over both time and the continuous and discrete states. The continuous distributions are parameterized by Gaussians, and categorical distributions are used for the discrete latent variables. 
For additional details on the proposal and twist, see Section~\ref{sec:app-slds} in the Appendix and for details on the generative model see \citet{linderman2017recurrent}.

\begin{figure}[!ht]
    \centering
    
    \begin{subfigure}{\textwidth}
        \centering
        \caption{Comparing learned dynamics of NAS-X, NASMC, and Laplace EM on rSLDS.}
        \includegraphics[scale=0.5]{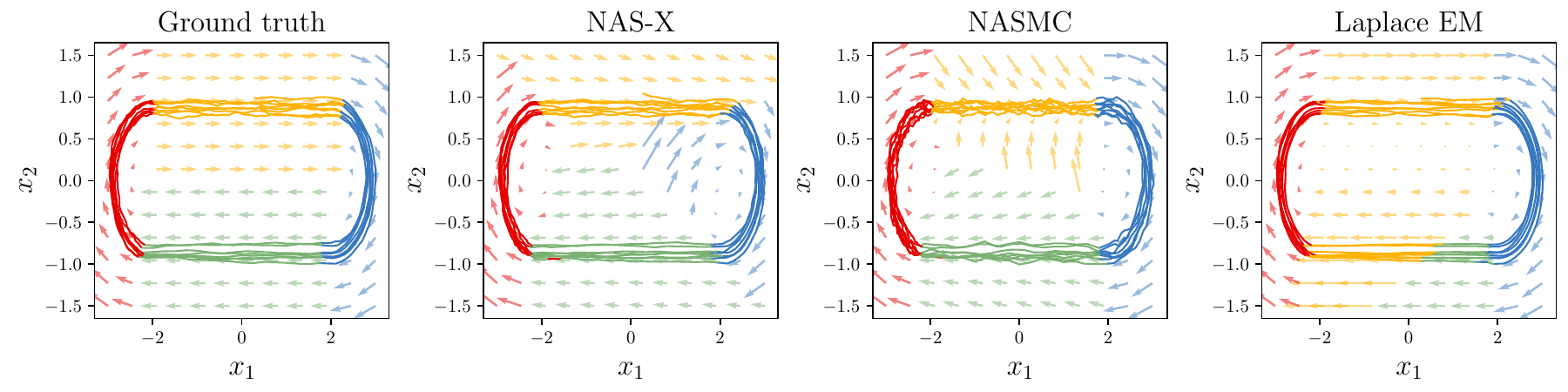}
    \end{subfigure}
    
    \begin{subfigure}{\textwidth}
        \caption{Train $\mathcal{L}_\mathrm{BPF}^{1024}$ for rSLDS.}
        % \vspace*{0.2cm}
        \label{subtab:svm}
        \centering
        \begin{tabular}{@{}lccc@{}}
            \toprule
            Method & $\sigma_{O}^2 = 0.001$ & $\sigma_{O}^2 = 0.01$ & $\sigma_{O}^2 = 0.1$\\ \midrule
            NAS-X & $\mathbf{19.837 \pm 0.0234}$ & $\mathbf{8.63 \pm 0.0015}$ & $-2.79 \pm 0.0009$ \\
            NASMC & $19.834 \pm 0.0018$ & $8.53 \pm 0.001$ & $-2.874 \pm 0.0007$ \\
            Laplace EM & $19.154 \pm 0.057$ & $8.54 \pm 0.0039$ & $\mathbf{-2.765 \pm 0.0012}$ \\
            RWS & $17.148 \pm 0.087$ & $6.314 \pm 0.023$ & $-5.78 \pm 0.0026$ \\
            \bottomrule
        \end{tabular}
        \label{tab:table_nascar}
    \end{subfigure}
    
    \caption{\textbf{Inference and model learning in switching linear dynamical systems (SLDS)}. (\textbf{top}) Comparison of learned dynamics and inferred latent states in model learning. Laplace EM sometimes learns incorrect segmentations, as seen in the rightmost panel.
    (\textbf{bottom}) Quantitative comparison of log marginal likelihood lower bounds obtained from running bootstrap particle filter (BPF) with learned models.
    }
    \label{fig:nascar_combined}
\end{figure}

We present qualitative results from model learning and inference in the top panel of Figure~\ref{fig:nascar_combined}. 
We compare the learned dynamics for NAS-X, NASMC, and a Laplace-EM algorithm designed specifically for recurrent state space models~\citep{pmlr-v119-zoltowski20a}.  
In each panel, we plot the vector field of the learned dynamics and the posterior means, with different colors corresponding to the four discrete states. 
NAS-X recovers the true dynamics accurately.
In the Table in Figure~\ref{fig:nascar_combined}, we quantitatively compare the model learning performances across these three approaches by running a bootstrap proposal with the learned models and the true dynamics and observation variances. 
We normalize the bounds by the sequence length.
NAS-X outperforms or performs on par with both NASMC and Laplace EM across the different observation noises $\sigma^2_{O}$.
See Section~\ref{sec:app-slds}, for additional results on inference.

\subsection{Biophysical Models of Neuronal Dynamics}
\label{sec:exp:hh}

For our final set of experiments we consider inference and parameter learning in Hodgkin-Huxley (HH) models~\citep{hodgkin1952quantitative,dayan2005theoretical} --- mechanistic models of voltage dynamics in neurons. These models use systems of coupled nonlinear differential equations to describe the evolution of the voltage difference across a neuronal membrane as it changes in response to external stimuli such as injected current. 
Understanding how voltage propagates throughout a cell is central to understanding electrical signaling and computation in the brain.
% \jdl{TODO: Sentence about spiking neurons that talks about why we care about fitting voltage traces.}

Voltage dynamics are governed by the flow of charged ions across the cell membrane, which is in turn mediated by the opening and closing of ion channels and pumps. HH models capture the states of these ion channels as well as the concentrations of ions and the overall voltage, resulting in a complex dynamical system with many free parameters and a high dimensional latent state space. Model learning and inference in this setting can be extremely challenging due to the dimensionality, noisy data, and expensive and brittle simulators that fail for many parameter settings.

\paragraph{Model Description} We give a brief introduction to the models used in this section and defer a full description to the appendix. We are concerned with modeling the potential difference across a neuronal cell membrane, $v$, which changes in response to currents flowing through a set of ion channels, $c \in \mathcal{C}$. Each ion channel $c$ has an \textit{activation} which represents a percentage of the maximum current that can flow through the channel and is computed as a nonlinear function $g_c$ of the channel state $\lambda_c$, with $g_c(\lambda_c) \in [0, 1]$. This activation specifies the time-varying conductance of the channel as a fraction of the maximum conductance of the channel, $\overline{g}_c$. Altogether, the dynamics for the voltage $v$ can be written as
\begin{align}
    c_m \frac{dv}{dt} &= \frac{I_\mathrm{ext}}{S} - \sum_{c \in \mathcal{C}}\overline{g}_c g_c(\lambda_c)(v - E_{c_{\mathrm{ion}}})
    \label{eq:voltage_dyn}
\end{align}
where $c_m$ is the specific membrane capacitance, $I_\mathrm{ext}$ is the external current applied to the cell, $S$ is the cell membrane surface area, $c_\mathrm{ion}$ is the ion transported by channel $c$, and $E_{c_\mathrm{ion}}$ is that ion's reversal potential. In addition to the voltage dynamics, the ion channel states $\{\lambda_c\}_{c \in \mathcal{C}}$ evolve as
\begin{align}
    \frac{d\lambda_c}{dt} &= A(v)\lambda_c + b(v) \quad \forall c \in \mathcal{C}
    \label{eq:state_dyn}
\end{align}
where $A(v)$ and $b(v)$ are nonlinear functions of the membrane potential that produce matrices and vectors, respectively. Together, equations (\ref{eq:voltage_dyn}) and (\ref{eq:state_dyn}) define a conditionally linear system of first-order ordinary differential equations (ODEs), meaning that the voltage dynamics are linear if the channel states are known and vice-versa. 

Following~\citet{lawson2022sixo}, we augment the deterministic dynamics with zero-mean, additive Gaussian noise to the voltage $v$ and unconstrained gate states $\mathrm{logit}(\lambda_c)$ at each integration time-step. The observations are produced by adding Gaussian noise to the instantaneous membrane potential.

\paragraph{Proposal and Twist Parameterization} For all models in this section we amortize proposal and twist learning across datapoints. SIXO and NAS-X proposals used bidirectional recurrent neural networks (RNNs) \cite{rumelhart1985learning,hochreiter1997long} with a hidden size of 64 units to process the raw observations and external current stimuli, and then fed the processed observations, previous latent state, and a transformer positional encoding \cite{vaswani2017attention} into a 64-unit single-layer MLP that produced the parameters of an isotropic Gaussian distribution over the current latent state. Twists were similarly parameterized with an RNN run in reverse across observations, combined with an MLP that accepts the RNN outputs and latent state and produces the twist values.

\paragraph{Integration via Strang Splitting} The HH ODEs are \textit{stiff}, meaning they are challenging to integrate at large step sizes because their state can change rapidly. While small step sizes can ensure numerical stability, they also make methods prohibitively slow. For example, many voltage dynamics of interest unfold over hundreds of milliseconds, which could take upwards of 40,000 integration steps at the standard 0.005 milliseconds per step. Because running our models for even 10,000 steps would be too costly, we developed new numerical integration techniques based on an explicit Strang splitting approach that allowed us to stably integrate at 0.1 milliseconds per step, a 20-time speedup~\citep{chen2020structure}. For details, see Section \ref{app:sec:strang-splitting} in the Appendix.

\subsubsection{Hodgkin-Huxley Inference Results}
\label{sec:hh-inf}
First, we evaluated NAS-X, NASMC, and SIXO in their ability to infer underlying voltages and channel states from noisy voltage observations. For this task we sampled 10,000 noisy voltage traces from a probabilistic Hodgkin-Huxley model of the squid giant axon \cite{hodgkin1952quantitative}, and used each method to train proposals (and twists for NAS-X and SIXO) to compute the marginal likelihood assigned to the data under the true model.
As in \cite{lawson2022sixo}, we sampled trajectories of length 50 milliseconds, with a single noisy voltage observation every millisecond. The stability of the Strang splitting based ODE integrator allowed us to integrate at $dt=0.1$ms, meaning there were 10 latent states per observation.

\begin{figure}
    \begin{subfigure}[b]{0.35\textwidth}
    \includegraphics[scale=0.672]{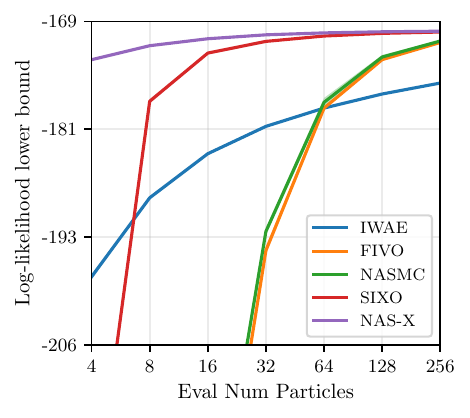}
    \caption{\textbf{HH inference performance.}\\
    Proposals were trained with 4 particles and evaluated across a range of particle numbers. RWS performed too poorly to be included. \label{fig:hh-inf-quant}} 
    \end{subfigure}
    \hspace{0.15cm}
    \begin{subfigure}[b]{0.62\textwidth}
    \includegraphics[scale=0.313]{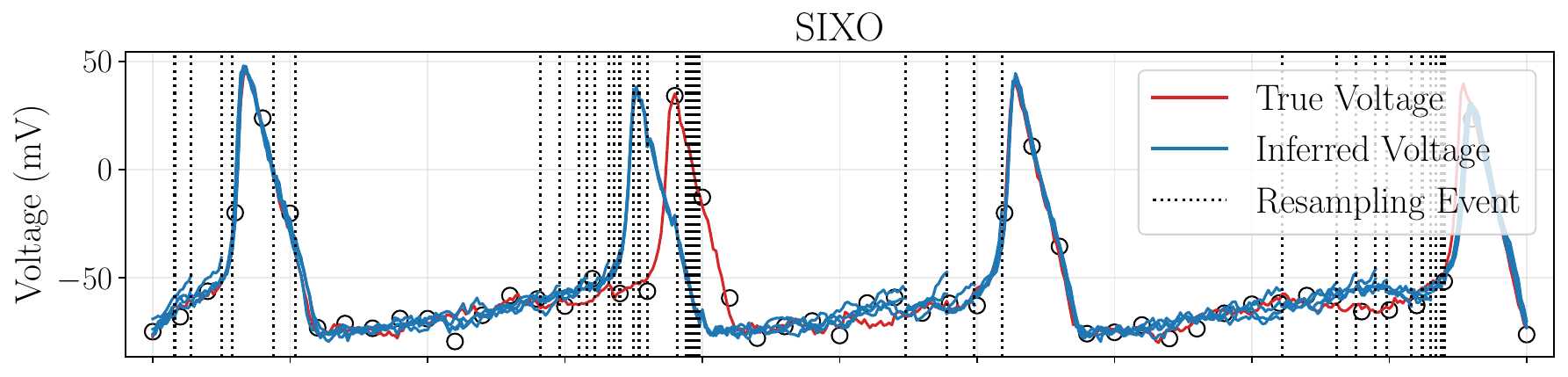}
    \includegraphics[scale=0.313]{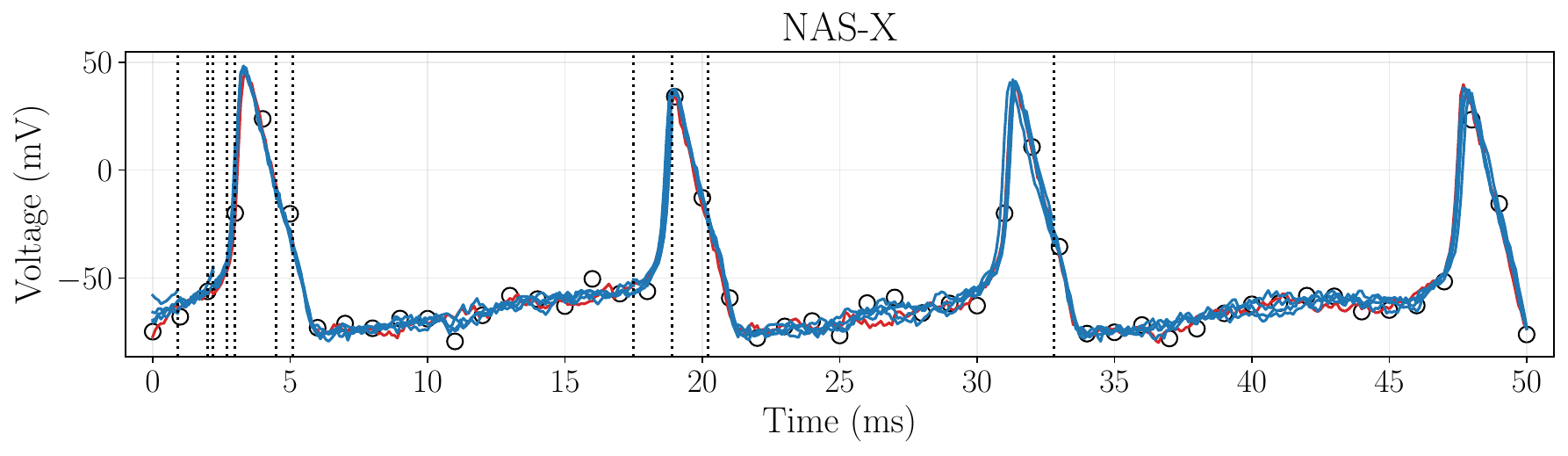}
    \caption{\textbf{Inferred voltage traces for SIXO and NAS-X.}\\
    \textbf{(top)} SIXO generates a high number of resampling events leading to particle degeneracy and a single mistimed spike. \textbf{(bottom)} NAS-X perfectly infers the latent voltage with no mistimed spikes, and resamples infrequently. See Fig. \ref{fig:hh-voltage-traces} in Appendix for NASMC's traces.\label{fig:hh-inf-quanl}}
    \end{subfigure}
    \caption{\textbf{Inference in Mechanistic HH Model}    \label{fig:hh-inf} }
\end{figure}

In Figure (\ref{fig:hh-inf-quant}) we plot the performance of proposals and twists trained with 4 particles and evaluated across a range of particle numbers. 
All methods perform roughly the same when evaluated with 256 particles, but with lower numbers of evaluation particles the smoothing methods emerge as more particle-efficient than the filtering methods.
To achieve NAS-X's inference performance with 4 particles, NASMC would need 256 particles, a 64x increase, and NAS-X is also on average 2x more particle-efficient than SIXO. 

In Figure \ref{fig:hh-inf-quanl} we further investigate these results by examining the inferred voltage traces of \mbox{NAS-X} and SIXO.
SIXO accurately infers the timing of most spikes but resamples at a high rate, which can lead to particle degeneracy and poor bound performance. 
NAS-X correctly infers the voltage across the whole trace with no spurious or mistimed spikes and almost no resampling events, indicating it has learned a high-quality proposal that does not generate poor particles that must be resampled away. 
These qualitative results support the quantitative results in Figure \ref{fig:hh-inf-quant}: SIXO's high resampling rate and NASMC's filtering approach lead to lower bound values.

These results highlight a benefit of RWS-based methods over VI methods: when the model is correctly specified, it can be beneficial to have a more deterministic proposal. 
Empirically, we find that maximizing the variational lower bound encourages the proposal to have high entropy, which in this case resulted in SIXO's poorer performance relative to NAS-X.
In the next section, we explore the implications of this on model learning.

\subsubsection{Hodgkin-Huxley Model Learning Results}
\label{sec:hh-model-learning}
\begin{figure}
    \begin{subfigure}[b]{\textwidth}
        \centering
        \includegraphics[scale=0.58]{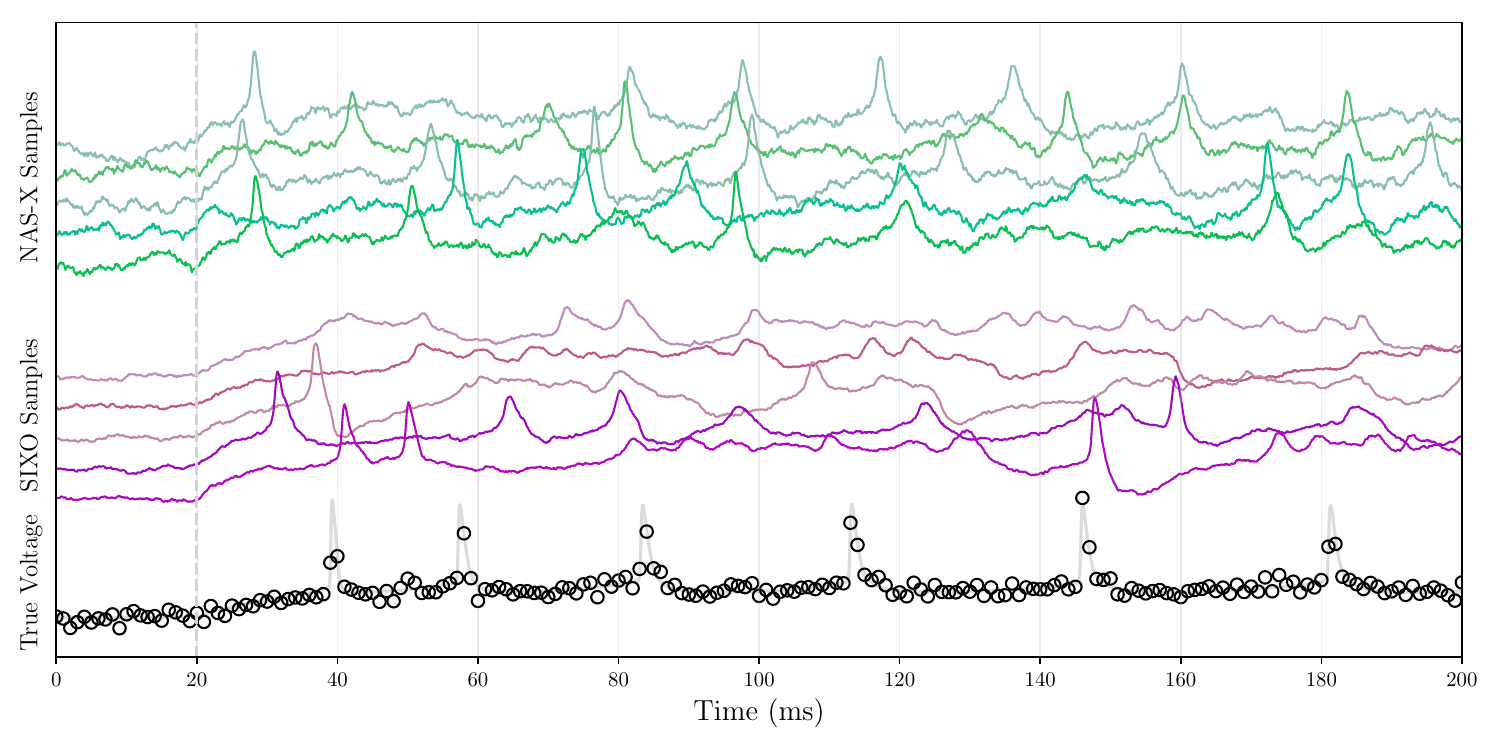}     
    \end{subfigure}
    \begin{subfigure}[b]{\textwidth}
    \centering
    \begin{tabular}{cccccc}
        \toprule
          Method &  $\mathcal{L}_\mathrm{BPF}^{32}$  & \# Spikes Err. & Rest Voltage Err. & Cross-Corr. & \% Failed\\
        \toprule
        NAS-X & $-686.4 \pm 6.8$ & $\mathbf{0.76 \pm 0.15}$ & $2.74 \pm 0.1$ & $\mathbf{6258 \pm 11}$ & $\mathbf{18.9}$\\
        SIXO & $\mathbf{-660.6 \pm 4.4}$ & $1.88 \pm 0.41$  & $\mathbf{1.8 \pm 0.2} $& ${6055 \pm 22}$ & $25.2$\\
        \toprule
    \end{tabular}
    \end{subfigure}
    \caption{\textbf{Model learning in HH model of a mouse pyramidal neuron} \textbf{(top)} Samples drawn from learned models when stimulated with a square pulse of 250 picoamps beginning at 20 milliseconds (vertical grey dashed line). NAS-X's samples are noisier than SIXO's, but spike more consistently. \textbf{(bottom)} A comparison of NAS-X- and SIXO-trained models along various evaluation metrics. SIXO's models achieve higher bounds, but are less stable and capture overall spike count more poorly than NAS-X-trained models. All errors are absolute errors.}
    \label{fig:hh-model-learning}
        \vspace{-5mm}
\end{figure}

In this section, we assess NAS-X and SIXO's ability to fit model parameters in a more complex, biophysically realistic model of a pyramidal neuron from the mouse visual cortex. This model was taken from the Allen Institute Brain Atlas \cite{wang2020allen} and includes 9 different voltage-gated ion channels as well as a calcium pump/buffer subsystem and a calcium-gated potassium ion channel. In total, the model had 38 free parameters and an 18-dimensional latent state space, in contrast to the 1 free parameter and 4-dimensional state space of the model considered by~\citet{lawson2022sixo}. For full details of the models, see Appendix Section \ref{app:sec:mouse-model-learning}.

We fit these models to voltage traces gathered from a real mouse neuron by the Allen Institute, but downsampled and noised the data to simulate a more common voltage imaging setting. We ran a hyperparameter sweep over learning rates and initial values of the voltage and observation noise variances (270 hyperparameter settings in all), and selected the best performing model via early stopping on the train log marginal likelihood lower bound. Each hyperparameter setting was run for 5 seeds, and each seed was run for 2 days on a single CPU core with 7 Gb of memory. Because of the inherent instability of these models, many seeds failed, and we discarded hyperparameter settings with more than 2 failed runs.

In Figure~\ref{fig:hh-model-learning} (bottom), we compare NAS-X and SIXO-trained models with respect to test set log-likelihood lower bounds as well as biophysically relevant metrics. To compute the biophsyical metrics, we sampled 32 voltage traces for each input stimulus trace in the test set, and averaged the feature errors over the samples and test set.
NAS-X better captures the number of spikes, an important feature of the traces, and attains a higher cross correlation. 
Both methods capture the resting voltage well, although SIXO attains a slightly lower error and outperforms NAS-X in terms of log-likelihood lower bound.

Training instability is a significant practical challenge when fitting mechanistic models. Therefore, 
we also include the percentage of runs that failed for each method. SIXO's more entropic proposals more frequently generate biophysically implausible latent states, causing the ODE integrator to return NaNs. In contrast, fewer of NAS-X's runs suffer from numerical instability issues, a great advantage when working with mechanistic models.

\section{Conclusion}
In this work we presented NAS-X, a new method for model learning and inference in sequential latent variable models, that combines reweighted wake-sleep framework and approximate smoothing SMC.  
Our approach involves learning twist functions to use in smoothing SMC, and then running smoothing SMC to approximate gradients of the log marginal likelihood with respect to the model parameters and gradients of the inclusive KL divergence with respect to the proposal parameters. 
We validated our approach in experiments including model learning and inference for discrete latent variable models and mechanistic models of neural dynamics, demonstrating that NAS-X offers compelling advantages in many settings.

\section*{Acknowledgements}
This work was supported by the Simons Collaboration on the Global Brain (SCGB 697092), NIH (U19NS113201, R01NS113119, and R01NS130789), NSF (2223827), Sloan Foundation, McKnight Foundation, the Stanford Center for Human and Artificial Intelligence. Dieterich was supported in part by a Stanford Data Science Fellowship.
\newpage

\clearpage
\bibliographystyle{unsrtnat}
\bibliography{06_main.bib}

\pagebreak
\section{Theoretical Analyses}\label{apd:theory}
\setcounter{thm}{0}
\begin{prop}
% \label{prop:consistency} 
{\normalfont Consistency of NAS-X's gradient estimates.} 
Suppose the twists are optimal so that  $r_{\bpsi}(\by_{t+1:T}, \bx_t) \propto p(\by_{t+1:T} \mid \bx_t)$ up to a constant independent of $\bx_t$ \ for \ $t=1,\ldots, T-1$.
Let $\hat{\nabla}_{\theta} \log p_{\theta}(\by_{1:T})$ be NAS-X's weighted particle approximation to the true gradient of the log marginal likelihood $\nabla_{\theta} \log p_{\theta}(\by_{1:T})$. Then $\hat{\nabla}_{\theta} \log p_{\theta}(\by_{1:T}) \overset{a.s.}{\to} \nabla_{\theta} \log p_{\theta}(\by_{1:T})$ as $N \rightarrow \infty$. 
\end{prop}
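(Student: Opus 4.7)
My strategy is to reduce the convergence claim to a standard SMC consistency result applied term-by-term to a decomposition obtained from Fisher's identity. First, I would invoke Fisher's identity~(\ref{eq:fishers}) together with the Markov factorization in~(\ref{equ:ssm}) to rewrite the true gradient as
\begin{align*}
\nabla_\theta \log p_\theta(\by_{1:T})
= \sum_{t=1}^T \E_{p_\theta(\bx_{t-1:t} \mid \by_{1:T})}\bigl[\nabla_\theta \log p_\theta(\bx_t, \by_t \mid \bx_{t-1})\bigr],
\end{align*}
using that each per-step score depends on the trajectory only through the pair $(\bx_{t-1}, \bx_t)$, so the full-posterior expectation collapses onto the two-step smoothing marginal. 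Here I take the convention that the $t=1$ term is an expectation under $p_\theta(\bx_1 \mid \by_{1:T})$ with the convention $p_\theta(\bx_1, \by_1 \mid \bx_0) \equiv p_\theta(\bx_1) p_\theta(\by_1 \mid \bx_1)$, matching the form in Eq.~(\ref{eq:nasx-model-grads}).

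Next, I would argue that under the optimality assumption the unnormalized SMC targets simplify to
\begin{align*}
\gamma_t(\bx_{1:t}) = p_\theta(\bx_{1:t}, \by_{1:t})\, r_\psi(\by_{t+1:T}, \bx_t) \propto p_\theta(\bx_{1:t}, \by_{1:T}),
\end{align*}
so the normalized intermediate targets $\pi_t$ coincide with the smoothing distributions $p_\theta(\bx_{1:t} \mid \by_{1:T})$. The particles $\{\bx_{1:t}^{(i)}, \overline{w}_t^{(i)}\}_{i=1}^N$ generated inside NAS-X are therefore the output of a standard SMC algorithm targeting the smoothing marginals. I would then cite the classical weighted-particle law of large numbers for SMC (e.g. Del Moral, 2004; Chopin, 2004, as summarized in Naesseth et al., 2019): for each fixed $t$ and for every $\pi_t$-integrable test function $\varphi_t$,
\begin{align*}
\sum_{i=1}^N \overline{w}_t^{(i)}\, \varphi_t(\bx_{1:t}^{(i)})
\;\overset{a.s.}{\longrightarrow}\; \E_{\pi_t}\bigl[\varphi_t(\bx_{1:t})\bigr]
\quad \text{as } N \to \infty.
\end{align*}
Applying this with $\varphi_t(\bx_{1:t}) = \nabla_\theta \log p_\theta(\bx_t, \by_t \mid \bx_{t-1})$ yields term-wise a.s.\ convergence to the $t$-th summand in the decomposition above.

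Finally, since $T$ is fixed and finite, almost sure convergence is preserved under summation, so adding the $T$ coordinates yields $\hat\nabla_\theta \log p_\theta(\by_{1:T}) \overset{a.s.}{\to} \nabla_\theta \log p_\theta(\by_{1:T})$. I expect the main obstacle to be regularity: the SMC LLN I invoke requires $\pi_t$-integrability of the per-step score and that the proposal $q_\phi$ dominates the twisted targets so the incremental weights are finite. I would address this by stating these as explicit hypotheses of the proposition (integrability of $\nabla_\theta \log p_\theta(\bx_t, \by_t \mid \bx_{t-1})$ under the smoothing marginals, and absolute continuity of $\pi_t$ with respect to the proposal kernel used in SMC); once these are in place the remaining bookkeeping is routine.
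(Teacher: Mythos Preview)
Your proposal is correct and follows essentially the same route as the paper's proof: both identify that under optimal twists the normalized SMC targets $\pi_t$ coincide with the smoothing distributions $p_\theta(\bx_{1:t}\mid\by_{1:T})$ and then invoke the standard SMC law of large numbers (the paper cites Theorem~7.4.3 of \citet{del2004feynman}) so that each time-$t$ weighted particle average converges a.s.\ to the corresponding posterior expectation. Your write-up is more explicit than the paper's---you spell out the Fisher-identity decomposition matching Eq.~(\ref{eq:nasx-model-grads}), the term-wise application of the SMC LLN, and the integrability/absolute-continuity hypotheses---whereas the paper compresses all of this into ``the desired result follows immediately.''
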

\begin{proof}
This is a direct application of Theorem 7.4.3 from \citet{del2004feynman}.  SMC methods provide strongly consistent estimates of expectations of test functions with respect to a normalized target distribution. That is, consider some test function $h$ and let SMC's particle weights be denoted as $w_i^t$. We have that 
\begin{align}
\sum_{i=1}^K w^k_t h(\bx_{1:t}^k) \overset{a.s.}{\to} \int h(\bx_{1:t}) \pi_t(\bx_{1:t}) \mathrm{\; as\; } K \to \infty
\end{align}
where $\pi_t(\bx_{1:t})$ is the normalized target distribution. NAS-X sets 
\begin{align}
\pi_t(\bx_{1:t}) \propto \gamma_t(\bx_{1:t}) = p_\theta(\bx_{1:t}, \by_{1:t})r_{\psi^*}(\by_{t+1:T}, \bx_t),
\end{align}
which by assumption is proportional to $p_\theta(\bx_{1:t}, \by_{1:t})p_\theta(\by_{t+1:T} \mid \bx_t) = p_\theta(\bx_{1:t}, \by_{1:T})$. The desired result follows immediately. 
\end{proof}
\begin{prop} {\normalfont Unbiasedness of NAS-X's gradient estimates.} 
% \label{prop:bias} 
Assume that proposal distribution $q_{\phi}(\bx_t \mid \bx_{1:t-1}, \by_{1:T})$ is optimal so that $q_{\phi}(\bx_t \mid \bx_{1:t-1}, \by_{1:T}) = p(\bx_t \mid \bx_{1:t-1}, \by_{1:T})$ \ for \ $t=1,\ldots, T$,
and the twists $r_{\bpsi}(\by_{t+1:T}, \bx_t)$ are optimal so that $r_{\bpsi}(\by_{t+1:T}, \bx_t) \propto p(\by_{t+1:T} \mid \bx_t)$ up to a constant independent of $\bx_t$ \ for \ $t=1,\ldots, T-1$. 
Let $\hat{\nabla}_{\theta} \log p_{\theta}(\by_{1:T})$ be NAS-X's weighted particle approximation to the true gradient of the log marginal likelihood, $\nabla_{\theta} \log p_{\theta}(\by_{1:T})$. Then, for any number of particles, $\mathbb{E}[\hat{\nabla}_{\theta} \log p_{\theta}(\by_{1:T})] = \log p_{\theta}(\by_{1:T})$. 
\end{prop}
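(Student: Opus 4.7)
The plan is to show that under the two optimality assumptions the SMC weights collapse to a constant and the resampled particles are exact draws from the true smoothing distribution, after which Fisher's identity (Eq.~\ref{eq:fishers}) delivers unbiasedness for any finite $N$.

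First I would write out the incremental SMC weight at step $t$ for NAS-X,
\begin{align*}
w_t^{(i)} \;=\; \frac{\gamma_t(\bx_{1:t}^{(i)})}{\gamma_{t-1}(\bx_{1:t-1}^{(i)})\, q_\phi(\bx_t^{(i)} \mid \bx_{1:t-1}^{(i)}, \by_{1:T})},
\end{align*}
with $\gamma_t(\bx_{1:t}) = p_\theta(\bx_{1:t}, \by_{1:t})\, r_\psi(\by_{t+1:T}, \bx_t)$. Using the optimal-twist assumption, $\gamma_t(\bx_{1:t}) = c_t\, p_\theta(\bx_{1:t}, \by_{1:T})$ for some $c_t$ independent of $\bx_{1:t}$. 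Factoring $p_\theta(\bx_{1:t}, \by_{1:T}) = p_\theta(\bx_{1:t-1}, \by_{1:T})\, p(\bx_t \mid \bx_{1:t-1}, \by_{1:T})$ and substituting the optimal proposal cancels the conditional exactly, leaving $w_t^{(i)} = c_t / c_{t-1}$, a deterministic constant across particles. The base case $t=1$ gives $w_1^{(i)} = c_1\, p_\theta(\by_{1:T})$, again constant in $i$. Hence $\overline{w}_t^{(i)} = 1/N$ at every step.

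Next I would argue inductively that each resampled particle $\bx_{1:t}^{(i)}$ is marginally distributed as $p_\theta(\bx_{1:t} \mid \by_{1:T})$. Because all normalized weights equal $1/N$, any standard resampling scheme draws uniformly from the current particles and so preserves the marginal distribution. Combined with propagation through the optimal proposal $q_\phi(\bx_t \mid \bx_{t-1}, \by_{t:T}) = p(\bx_t \mid \bx_{t-1}, \by_{1:T})$ (using the Markov property), this gives $(\bx_{t-1}^{(i)}, \bx_t^{(i)}) \sim p_\theta(\bx_{t-1:t} \mid \by_{1:T})$ marginally for every $i$ and $t$.

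Taking the expectation of the NAS-X estimator in Eq.~(\ref{eq:nasx-model-grads}), pulling $1/N$ outside, and using marginal identity of the pairs yields
\begin{align*}
\E[\hat{\nabla}_\theta \log p_\theta(\by_{1:T})]
 = \sum_{t=1}^T \E_{p_\theta(\bx_{t-1:t} \mid \by_{1:T})}\!\left[\nabla_\theta \log p_\theta(\bx_t, \by_t \mid \bx_{t-1})\right].
\end{align*}
Since the integrand depends only on $(\bx_{t-1}, \bx_t)$, this sum equals $\E_{p_\theta(\bx_{1:T}\mid\by_{1:T})}[\nabla_\theta \log p_\theta(\bx_{1:T}, \by_{1:T})]$, which by Fisher's identity is exactly $\nabla_\theta \log p_\theta(\by_{1:T})$.

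The main obstacle I expect is the weights-are-constant step: it requires carefully matching the recursive factorization of $p_\theta(\bx_{1:t}, \by_{1:T})$ against the chosen proposal and twist factorizations so that the dependence on $\bx_{1:t}^{(i)}$ fully cancels, and then verifying that the resampling scheme in use (assumed unbiased and invariant under equal weights) genuinely preserves the particle distribution rather than just its marginals in expectation. Once those bookkeeping details are settled, the rest of the argument is a short application of linearity and Fisher's identity.
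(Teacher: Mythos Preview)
Your proposal is correct and follows essentially the same route as the paper: both arguments show by induction that, under optimal twists and optimal proposal, the incremental SMC weights are particle-independent constants (the paper computes $\alpha_1^k = p(\by_{1:T})$ and $\alpha_t^k = 1$; you package the same cancellation via $w_t^{(i)} = c_t/c_{t-1}$), conclude that resampling preserves the exact posterior marginals, and then read off unbiasedness from Fisher's identity. Your final step, which explicitly reduces $\E[\hat{\nabla}_\theta]$ to the sum of posterior expectations and then to $\nabla_\theta \log p_\theta(\by_{1:T})$, is spelled out more fully than in the paper, which simply asserts that the claim follows once the particles are shown to be exact posterior draws.
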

\begin{proof}
We first provide a proof sketch then give a more detailed derivation, adapting the proof of a similar theorem in \citet{lawson2022sixo}. 
We will prove that particles produced from running SMC with the smoothing targets and optimal proposal are exact samples from the posterior distribution of interest. The claim then follows immediately.  
Under the stated assumptions, both NAS-X and NASMC propose particles from the true posterior. However, the different intermediate target distributions will affect how these particles are distributed after reweighting. For NAS-X, the particles will have equal weight since they are reweighted using the smoothing targets. Thus, after reweighting, the particles are still samples from the true posterior. In contrast, in NASMC, the samples are reweighted by filtering targets and will not be distributed as samples from the true posterior.

We first consider NAS-X which uses the smoothing targets. 
We will show that the particles drawn at each timestep are sampled from the posterior distribution. This will follow from the fact that the particle weights at each timestep in the SMC sweep are equal; that is, $w_t^{k} = 1$ or $w_t^{k} = p(\by_{1:T})$ for $k=1,\ldots,K$, depending on whether resampling occurred.
We proceed by induction on $t$, the timestep in the SMC sweep. 

For $t=1$ note that
\begin{enumerate}
    \item $\gamma_1(\bx_1^k) = p(\bx_1^k, \by_1)p(\by_{2:T} \mid \bx_1^k)$,
    \item $\gamma_0 \triangleq 1$,
    \item and $q_1(\bx_1^k) = p(\bx_1^k \mid \by_{1:T})$.
\end{enumerate}
This implies that the incremental weight $\alpha_1^k$ is
\begin{align}
\alpha_1^k = \frac{p(\bx_1^k, \by_1)p(\by_{2:T} \mid \bx_1^k)}{p(\bx_1^k \mid \by_{1:T})} = \frac{p(\bx_1^k, \by_{1:T})}{p(\bx_1^k \mid \by_{1:T})} = p(\by_{1:T})
\end{align}
which does not depend on $k$. Because $w_0^k \triangleq 1$, we have that $w_1^k = w_0^k\alpha_1^k = p(\by_{1:T})$ for all $k$. 

Note that, since the proposal is optimal, prior to SMC's reweighting step, the particles were distributed as follows
$\bx_1^k \sim  p(\bx_1 \mid \by_{1:T})$. 
Since the incremental weights are equal, the distribution of particles is unchanged.

For the induction step, assume that $w_{t-1}^{1:K}$ equals $1$ or $p(\by_{1:T})$ and that $\bx_{1:t-1}^k \sim p(\bx_{1:t-1} \mid \by_{1:T})$. The particles are distributed as follows, 
$\bx^k_t \sim p(\bx_t \mid \bx_{1:t-1}, \by_{1:T})$. This implies that $(\bx_{1:t-1}^k, \bx_t^k) \sim p(\bx_t \mid \bx_{1:t-1}, \by_{1:T}) p(\bx_{1:t-1} \mid \by_{1:T}) = p(\bx_{1:t} \mid \by_{1:T})$.

We now show that the incremental particle weights $\alpha_t^k$ are equal using the following identities/assumptions
\begin{enumerate}
    \item $\gamma_t(\bx_{1:t}^k) = p(\bx_{1:t}^k, \by_{1:t})p(\by_{t+1:T} \mid \bx_t^k)$,
    \item $\gamma_{t-1}(\bx_{1:t-1}^k) = p(\bx_{1:t-1}^k, \by_{1:t-1})p(\by_{t:T} \mid \bx_{t-1}^k)$,
    \item $r_{\bpsi}(\by_{t+1:T}, \bx_t) \propto p(\by_{t+1:T} \mid \bx_t)$ up to a constant independent of $\bx_t$ \ for \ $t=1,\ldots, T-1$. 
    \item and $q_t(\bx_t^k) = p(\bx_{t}^k \mid \bx_{1:t-1}^k, \by_{1:T})$
\end{enumerate}
This shows that  $\alpha_t^k$ is given by
\begin{align}
    \alpha_t^k &= \frac{p(\bx_{1:t}^k, \by_{1:t})p(\by_{t+1:T} \mid \bx_t^k)}{p(\bx_{1:t-1}^k, \by_{1:t-1})p(\by_{t:T} \mid \bx_{t-1}^k)p(\bx_{t}^k \mid \bx_{1:t-1}^k, \by_{1:T})}\\
    &= \frac{p(\bx_{1:t}^k, \by_{1:T})}{p(\bx_{1:t-1}^k, \by_{1:T})p(\bx_{t}^k \mid \bx_{1:t-1}^k, \by_{1:T})}\\
    &= \frac{p(\bx_{1:t-1}^k, \by_{1:T})p(\bx_{t}^k \mid \bx_{1:t-1}, \by_{1:T})}{p(\bx_{1:t-1}^k, \by_{1:T})p(\bx_{t}^k \mid \bx_{1:t-1}^k, \by_{1:T})}\\
    &= 1
\end{align}
for $k=1,\ldots,K$.

There are two cases depending on the value of the weights at the previous timestep. If $w_{t-1}^{1:K} = 1$, then $w_t^k = w_{t-1}^k\alpha_t^k = 1$ for all $k$.  
On the other hand, if $w_{t-1}^{1:K} = p(\by_{1:T})$ then $w_t^k = p(\by_{1:T})$ for all $k$. 
Therefore, even after reweighting, the particles are still drawn from the true posterior distribution.
If resampling occurs, since the weights are equal in both cases, the distribution of the particles remains unchanged. 

To conclude, we note that the incremental particle weights are, in general, not the same for NASMC. 
To see this, consider NASMC's incremental weights at timestep $1$. 
\begin{align}
    \alpha_1^k &= \frac{p(\bx_{1}^k, \by_{1})}{p(\bx_{t}^k \mid \by_{1:T})}
\end{align}
After reweighting, the particles will be distributed according to the filtering distributions. 
The distribution of particles after reweighting is proportional to $p(\bx_{1}^k \mid \by_{1})$.
This is because the distribution of the reweighted particles is proportional to the incremental weights times the optimal proposal distribution. 
Therefore, the term in the denominator corresponding to the proposal cancels out with the proposal distribution term.

Interestingly, the particle weights will be the same at each iteration under a certain dependency structure for the model $p(\bx_{1:t-1}| \by_{1:t}) = p(\bx_{1:t-1}| \by_{1:t-1})$ that was identified in \citet{maddison2017filtering}. 
However, this dependency is not satisfied in general and therefore NASMC's gradients are not unbiased. 
\end{proof}

\section{Derivations}
\label{apd:first}

\subsection{Gradient of the Marginal Likelihood} 
\label{sec:app:marginal-likelihood-grad}

We derive the gradients for the marginal likelihood. 
This identity is known as Fisher's identity. 
\begin{align*}
\nabla_{\theta} \log p(\by_{1:T})
&= \nabla_{\theta} \log \int p_{\theta}(\bx_{1:T}, \by_{1:T}) d\by_{1:T} \\
&= \frac{1}{p_{\theta}(\by_{1:T})} \nabla_{\theta} \int p_{\theta}(\bx_{1:T}, \by_{1:T}) d\bx_{1:T} \\
&= \frac{1}{p_{\theta}(\by_{1:T})} \int \nabla_{\theta} p_{\theta}(\bx_{1:T}, \by_{1:T}) d\bx_{1:T} \\
&= \frac{1}{p_{\theta}(\by_{1:T})} \int p_{\theta}(\bx_{1:T}, \by_{1:T}) \nabla_{\theta} \log p_{\theta}(\bx_{1:T}, \by_{1:T}) d\bx_{1:T} \\
&= \int p_{\theta}(\bx_{1:T} | \by_{1:T}) \nabla_{\theta} \log p_{\theta}(\bx_{1:T}, \by_{1:T}) d\bx_{1:T} \\
&= \int p_{\theta}(\bx_{1:T} |\by_{1:T}) \nabla_{\theta} \sum_t \log p_{\theta}(\by_t, \bx_t | \bx_{t-1}) d\bx_{1:T} \\
&= \sum_t  \int p_{\theta}(\bx_{1:T} | \by_{1:T}) \nabla_{\theta} \log p_{\theta}(\by_t, \bx_t | \bx_{t-1}) d\bx_{1:T} \\
&= \sum_t  \mathbb{E}_{p_{\theta}(\bx_{1:T} | \by_{1:T})}\left[\nabla_{\theta} \log p_{\theta}(\by_t, \bx_t | \bx_{t-1}) \right] 
\end{align*}
The key steps were the log-derivative trick and Bayes rule. 

\subsection{Gradient of Inclusive KL Divergence}
\label{sec:app:inclusive-kl-grad}

Below, we derive the gradient of the inclusive KL divergence for a generic Markovian model.
In this derivation, we assume there are no shared parameters between the proposal and model.
 
\begin{align*}
-\nabla_{\phi}\mathrm{KL}(p_{\theta} || q_{\phi})
&= \nabla_{\phi} \int p_{\theta}(\bx_{1:T} | \by_{1:T}) \log q_{\phi}(\bx_{1:T} | \by_{1:T}) d\bx_{1:T} \\
&= \int p_{\theta}(\bx_{1:T} | \by_{1:T}) \nabla_{\phi} \log q_{\phi}(\bx_{1:T} | \by_{1:T}) d\bx_{1:T} \\ 
&= \int p_{\theta}(\bx_{1:T} | \by_{1:T}) \nabla_{\phi} \left(\sum_{t} \log q_{\phi}(\bx_t | \bx_{t-1}, \by_{t:T})\right) d\bx_{1:T} \\ 
&= \sum_t \int p_{\theta}(\bx_{1:T} | \by_{1:T}) \nabla_{\phi} \log q_{\phi}(\bx_t |  \bx_{t-1}, \by_{t:T}) d\bx_{1:T} \\
&= \sum_t \mathbb{E}_{p_{\theta}(\bx_{1:T} | \by_{1:T})}\left[\nabla_{\phi} \log q_{\phi}(\bx_t |  \bx_{t-1}, \by_{t:T})\right]
\end{align*}
We use the assumption that there are no shared parameters in the second equality.

\subsection{Density Ratio Estimation via Classification}
\label{sec:app:dre}

Here we briefly summarize density ratio estimation (DRE) via classification. For a full treatment, see \citet{sugiyama2012density}.

Let $a(x)$ and $b(x)$ be two distributions defined over the same space $\mathcal{X}$, and consider a classifier $g: \mathcal{X} \to \mathbb{R}$ that accepts a specific $x \in \mathcal{X}$ and classifies it as either being sampled from $a$ or $b$. We will train this classifier to predict whether a given $x$ was sampled from $a(x)$ or $b(x)$. The raw outputs (logits) of this classifier will approximate $\log(a(x) / b(x))$ up to a constant that does not depend on $x$.

To see this, define an expanded generative model where we first sample $z\in \{0, 1\}$ from a Bernoulli random variable with probability $0 < \rho < 1$, and then sample $x$ from $a(x)$ if $z = 1$, and sample $x$ from $b(x)$ if $z = 0$. This defines the joint distribution
\begin{align}
    p(x, z) = p(z) p(x | z) = \mathrm{Bernoulli}(z ; \rho)a(x)^{z}b(x)^{(1-z)},
\end{align}
where $p(x \mid z = 1) = a(x)$ and $p(x \mid z=0) = b(x)$.

Let $g: \mathcal{X} \to \mathbb{R}$ be a function that accepts $x \in \mathcal{X}$ and produces the logit for Bernoulli distribution over $z$. This function will parameterize a classifier via the sigmoid function, meaning that the classifier's Bernoulli conditional distribution over $z$ is defined as
\begin{align}
    p_g(z | x) \triangleq \sigma(g(x))^z(1-\sigma(g(x)))^{1-z}, \quad z\in\{1, 0\}
\end{align}
where $\sigma$ is the sigmoid function and $\sigma^{-1}$ is its inverse, the logit function
\begin{align}
    \sigma(\ell) = \frac{1}{1+e^{-\ell}}, \quad \sigma^{-1}(p) = \log\left(\frac{p}{1-p} \right).
\end{align}
The optimal function $g^*$ will be selected by solving the maximum likelihood problem
\begin{align}
g^* \triangleq \arg\max_g \mathbb{E}_{p(x, z)}\left[p_g(z \mid x)\right].
\end{align}
The solution to this problem is the true $p(z \mid x)$. Because we have not restricted $g$, this solution can be obtained. Thus,
\begin{align}
    p(z = 1 \mid x) = &p_{g^*}(z = 1 \mid x)\\
    = & \sigma(g^*(x))^1(1-\sigma(g^*(x)))^{1-1}\\
    = & \sigma(g^*(x)).
\end{align}
This in turn implies that
\begin{align}
    g^*(x) = &\sigma^{-1}(p(z=1 \mid x))\\
    = & \log \left( \frac{p(z=1 \mid x)}{1 - p(z=1 \mid x)}\right)\\
    = & \log \left( \frac{p(z=1 \mid x)}{p(z=0 \mid x)}\right)\\
      = & \log \left( \frac{p(z=1 \mid x)p(x)}{p(z=0 \mid x)p(x)}\right)\\
      = & \log \left( \frac{p(z=1, x)}{p(z=0, x)}\right)\\
      = & \log \left( \frac{p(x \mid z=1)p(z=1)}{p(x \mid z=0)p(z=0)}\right)\\
      = & \log \left( \frac{p(x \mid z=1)}{p(x \mid z=0)}\right) + \log \left( \frac{p(z=1)}{p(z=0)}\right)\\
      = & \log \left( \frac{a(x)}{b(x)}\right) + \log \left(\frac{\rho}{1-\rho}\right)
\end{align}
Thus, the optimal solution to the classification problem, $g^*$, is proportional to $\log(a(x) / b(x))$ up to a constant that does not depend on $x$. In practice we observe empirically that as long as a sufficiently flexible parametric family for $g$ is selected, $g^*$ will closely approximate the desired density ratio.

In the case of learning the ratio required for smoothing SMC,
\begin{align}
    \frac{p_{\btheta}(\bx_{t} \mid \by_{t+1:T})}{p_{\btheta}(\bx_{t})},
\end{align}
\citet{lawson2022sixo} instead learn the equivalent ratio
\begin{align}
    \frac{p_{\btheta}(\bx_{t}, \by_{t+1:T})}{p_{\btheta}(\bx_{t}) p_{\btheta}(\by_{t+1:T})}. \label{app:eq:sixo-dre-rat}
\end{align}
As per the previous derivation, it suffices to train a classifier to distinguish between samples from the numerator and denominator of Eq. \ref{app:eq:sixo-dre-rat}. To accomplish this, \citet{lawson2022sixo} draw paired and unpaired samples from the model that are distributed marginally according to the desired densities. Specifically, consider drawing
\begin{align}
    \bx_{1:T}, \by_{1:T} \sim p_{\btheta}(\bx_{1:T}, \by_{1:T}) \quad \tilde{\bx}_{1:T} \sim p_{\btheta}(\bx_{1:T}) \label{app:eq:samples}
\end{align}
and note that any $\bx_t, \by_{t+1:T}$ selected from the sample will be distributed marginally according to $p_{\btheta}(\bx_t, \by_{t+1:T})$. Similarly, any $\tilde{\bx}_t, \by_{t+1:T}$ will be distributed marginally as $p_{\btheta}(\bx_{t}) p_{\btheta}(\by_{t+1:T})$. In this way, $T-1$ positive and negative training examples for the DRE classifier can be drawn using a single set of samples as in Eq. (\ref{app:eq:samples}).

The twist training process is summarized in Algorithm \ref{alg:twist-train}.

\begin{algorithm}
\caption{Twist Training }
\label{alg:twist-train}
  \SetKwProg{myproc}{Procedure}{}{}
  \SetKwFunction{twist}{twist-training}
  \SetKwFunction{opt}{grad-step}  
  \myproc{\twist{$\theta$, $\psi_0$}}{
  $\psi \gets \psi_0$\\
  \While{not converged}{
    $\bx_{1:T}, \by_{1:T} \sim p_{\btheta}(\bx_{1:T}, \by_{1:T})$\\
    $\tilde{\bx}_{1:T} \sim p_{\btheta}(\bx_{1:T})$\\
    $\mathcal{L}(\psi) = \frac{1}{T-1} \sum_{t=1}^{T-1} \log \sigma(r_\psi(\bx_t, \by_{t+1:T})) + \log(1- \sigma(r_\psi(\tilde{\bx}_t, \by_{t+1:T})))$
    $\psi \gets \opt(\psi, \nabla_\psi \mathcal{L}(\psi))$
   }
   }{}
\KwRet $\psi$
\end{algorithm}

\section{LGSSM}
\label{sec:app-lssm}
\begin{figure*}
\centering
\includegraphics[scale=0.32]{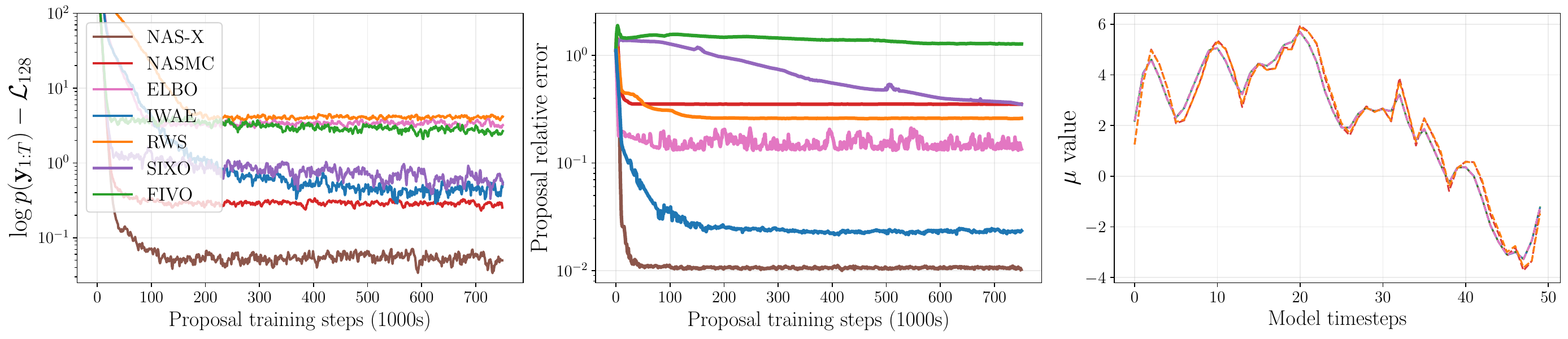}
\caption{\textbf{Comparison of NAS-X vs baseline methods on Inference in LG-SSM.}
(\textbf{left}) Comparison of log-marginal likelihood bounds (lower is better), (\textbf{middle}) proposal parameter error (lower is better),
and (\textbf{right}) learned proposal means. 
NAS-X outperforms several baseline methods and recovers the true posterior marginals.}
\label{fig:exp:lgssm_baselines_mean}

\end{figure*}

\paragraph{Model Details}
We consider a one-dimensional linear Gaussian state space model with joint distribution
\begin{align}
    p(\bx_{1:T}, \by_{1:T}) = \mathcal{N}(\bx_1 ; 0, \sigma_x^2) \prod_{t=2}^{T} \mathcal{N}(\bx_{t+1} ; \bx_t, \sigma_x^2) \prod_{t=1}^{T} \mathcal{N}(\by_{t} ; \bx_t, \sigma_y^2).
    \label{eq:app:lg_ssm}
\end{align}
In our experiments we set the dynamics variance $\sigma_x^2 = 1.0$ and the observation variance $\sigma_y^2 = 1.0$.

\paragraph{Proposal Parameterization}
For both NAS-X and NASMC, we use a mean-field Gaussian proposal factored over time
\begin{equation}
q(x_{1:T}) = \prod_{t=1}^T q_t(x_t) = \prod_{t=1}^T \mathcal{N}(x_t; \mu_t, \sigma^2_t),
\label{eq:proposal_lgssm}
\end{equation}
with parameters $\mu_{1:T}$ and $\sigma^2_{1:T}$ corresponding to the means and variances at each timestep.
In total, we learn $2T$ proposal parameters.

\paragraph{Twist Parametrization}
We parameterize the twist as a quadratic function in $x_t$ whose coefficients are functions of the observations and time step and are learned via the density ratio estimation procedure described in \citep{lawson2022sixo}. 
We chose this form to match the analytic log density ratio for the model defined in Eq~\ref{eq:lg_ssm}. 
Given that $p(x_{1:T}, y_{1:T})$ is a multivariate Gaussian, we know that $p(x_t \mid y_{t+1:T})$ and $p(x_t)$ are both marginally Gaussian. Let 
\begin{align*}
    p(x_{t} \mid y_{t+1:T}) &\triangleq \mathcal{N}(\mu_1, \sigma^2_1)\\
    p(x_{t}) &\triangleq \mathcal{N}(0, \sigma^2_1)\\
\end{align*}
Then,
\begin{align*}
    \log \left(\frac{p(x_t \mid y_{t+1:T})}{p(x_t)}\right) =& \log \mathcal{N}(x_t; \mu_1, \sigma^2_1) - \log \mathcal{N}(x_t; 0, \sigma_2^2)\\
    =& \log Z(\sigma_1) - \frac{1}{2\sigma_1^2}x_t^2 + \frac{\mu_1}{\sigma_1^2}x_t - \frac{\mu_1^2}{2\sigma_1^2} - \log Z(\sigma_2) + \frac{1}{2\sigma^2}x_t^2
\end{align*}
where $Z(\sigma) = \frac{1}{\sigma\sqrt{2\pi}}$, so $\log Z(\sigma) = - \log(\sigma\sqrt{2\pi})$.

Collecting terms gives:
\begin{align*}
- \log(\sigma_1\sqrt{2\pi}) + \log(\sigma_2\sqrt{2\pi})\\
- \frac{1}{2}\left(\frac{1}{\sigma_1^2} - \frac{1}{\sigma_2^2}\right)x_t^2\\
+\frac{\mu_1}{\sigma_1^2}x_t\\
-\frac{\mu_1^2}{2\sigma_1^2}
\end{align*}
So we'll define
\begin{align*}
a &\triangleq - \frac{1}{2}\left(\frac{1}{\sigma_1^2} - \frac{1}{\sigma_2^2}\right)\\
b &\triangleq \frac{\mu_1}{\sigma_1^2}\\
c &\triangleq -\frac{\mu_1^2}{2\sigma_1^2} - \log(\sigma_1\sqrt{2\pi}) + \log(\sigma_2\sqrt{2\pi})
\end{align*}
We'll explicitly model $\log\sigma_1^2$, $\log\sigma_2^2$ and $\mu_1$. Both $\log\sigma_1^2$ and $\log\sigma_2^2$ are only functions of $t$, not of $y_{t+1:T}$, so those can be vectors of shape $T$ initialized at $0$. $\mu_1$ is a linear function of $y_{t+1:T}$ and $t$, so that can be parameterized by a set of $T \times T$ weights, initialized to $1/T$ and $T$ biases initialized to 0.

\paragraph{Training Details}
We use a batch size of 32 for the density ratio estimation step. 
Since we do not perform model learning, we do not repeatedly alternate between twist training and proposal training for NAS-X. 
Instead, we first train the twist for 3,000,000 iterations with a batch size of 32 using samples from the model. 
We then train the proposal for $750,000$ iterations.
For the twist, we used Adam with a learning rate schedule that starts with a constant learning rate of $1e-3$, decays the learning by $0.3$ and $0.33$ at $100,000$ and $300,000$ iterations. 
For the proposal, we used Adam with a constant learning rate of $1e-3$.
For NASMC, we only train the proposal.

\paragraph{Evaluation}
In the right panel of Figure~\ref{fig:exp:lgssm_baselines}, we compare the bound gaps of NAS-X and NASMC averaged across 20 different samples from the generative model.
To obtain the bound gap for NAS-X, we run SMC 16 times with 128 particles and with the learned proposal and twists. 
We then record the average log marginal likelihood.  
For NASMC, we run SMC with the current learned proposal (without any twists).

\section{rSLDS}
\label{sec:app-slds}
\begin{figure*}[!ht]
    \centering
         \includegraphics[scale=0.62]{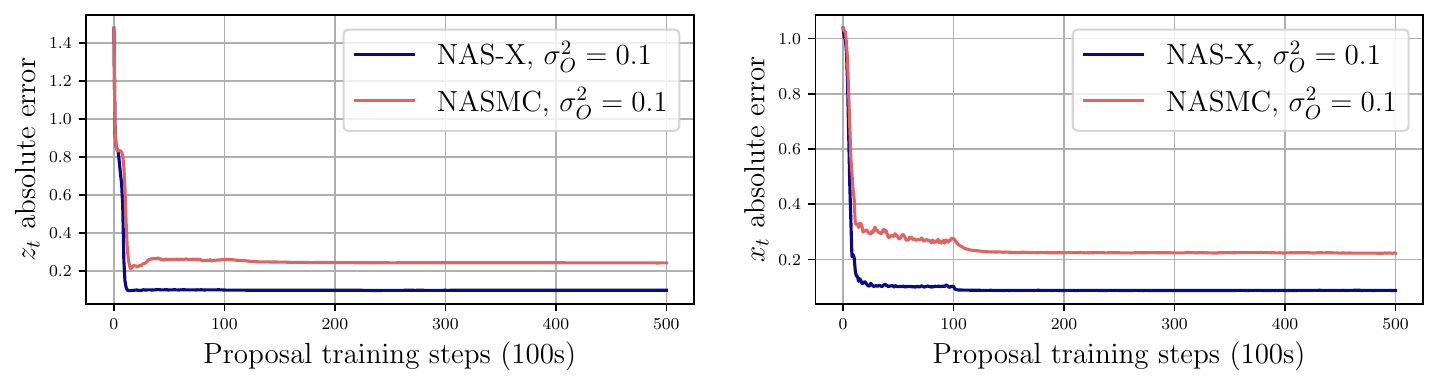}
\label{fig:exp:nascar_full:rel_parameter_error}
    \caption{\textbf{Inference in NASCAR experiments.}}
    \label{fig:exp:nascar_inference}
\end{figure*}

\paragraph{Model details}

The generative model is as follows. 
At each time $t$, there is a discrete latent state $z_t \in \{1, \ldots, 4\}$
as well as a two-dimensional continuous latent state $x_t \in \mathbb{R}^2$.
The discrete state transition probabilities are given by
\begin{align}
p(z_{t+1} = i \mid z_{t} = j, x_{t}) \propto
\exp{\left( r_i + R_i ^T x_{t-1} \right)}
\label{eq:markov_zt}
\end{align}
Here $R_i$ and $r_i$ are weights for the discrete state $z_i$.

These discrete latent states dictates two-dimensional latent state $x_t \in \mathbb{R}^2$ which evolves according to linear Gaussian dynamics.
\begin{align}
x_{t+1} = A_{z_{t+1}}x_t + b_{z_{t+1}} + v_t, \quad \quad v_t \sim^{\text{iid}} \mathcal{N}(0, Q_{z_{t+1}})
\label{eq:continuous_latent_state}
\end{align}
Here $A_k, Q_k \in \mathbb{R}^{2x2}$ and $b_k \in \mathbb{R}^{2}$.
Importantly, from Equations~\ref{eq:continuous_latent_state} and~\ref{eq:markov_zt}
we see that the dynamics of the continuous latent states and discrete latents are coupled.
The discrete latent states index into specific linear dynamics and the discrete transition probabilities depend on the continuous latent state. 

The observations $y_t \in \mathbb{R}^{10}$ are linear projections of the continuous latent state $x_t$ with some additive Gaussian noise. 
\begin{align}
y_{t} = Cx_t + d + w_t, \quad \quad v_t \sim^{\text{iid}} \mathcal{N}(0, S)
\label{eq:observation}
\end{align}
Here $C, S \in \mathbb{R}^{10x10}$ and $d \in \mathbb{R}^{10}$. 

\paragraph{Proposal Parameterization}
We use a mean-field proposal distribution factorized over the discrete and continuous latent variables  (i.e. $q(\bz_{1:T}, \bx_{1:T}) = q(\bz_{1:T})q(\bx_{1:T})$). 
For the continuous states, $q(\bx_{1:T})$ is a Gaussian factorized over time with parameters $\mu_{1:T}$ and $\sigma^2_{1:T}$. 
For the discrete states, $q(\bz_{1:T})$ is a Categorical distribution over $K$ categories factorized over time with parameters $p_{1:T}^{1:K}$.
In total, we learn $2T + TK$ proposal parameters.

\paragraph{Twist Parameterization} 
We parameterize the twists using a recurrent neural network (RNN) that is trained using density ratio estimation.
To produce the twist values at each timestep, we first run a RNN backwards over the observations $\by_{1:T}$ to produce a sequence of encodings $\mathbf{e}_{1:T-1}$.
We then concatenate the encodings of $\bx_t$ and $\bz_t$ into a single vector and pass that vector into an MLP which outputs the twist values at each timestep.
The RNN has one layer with 128 hidden units. 
The MLP has 131 hidden units and ReLU activations.

\paragraph{Model Parameter Evaluation} 
We closely follow the parameter initialization strategy employed by \citet{linderman2017recurrent}.
First, we use PCA to obtain a set of continuous latent states and initialize the matrices $C$ and $d$.
We then fit an autoregressive HMM to the estimated continuous latent states in order to initialize the dynamics matrices $\{A_k, b_k\}$.
Importantly, we do not initialize the proposal with the continuous latent states described above.

\paragraph{Training Details}
We use a batch size of 32 for the density ratio estimation step. 
We alternate between 100 steps of twist training and 100 steps of proposal training for a total of 50,000 training steps in total.
We used Adam and considered a grid search over the model, proposal, and twist learning rates.
In particular, we considered learning rates of $1e-4, 1e-3, 1e-2$ for the model, proposal, and twist.
\paragraph{Bootstrap Bound Evaluation} 
To obtain the log marginal likelihood bounds and standard deviations in Table~\ref{tab:table_nascar}, 
we ran a bootstrapped particle filter (BPF) with the learned model parameters for all three methods (NAS-X, NASMC, Laplace EM) using 1024 particles. 
We repeat this across 30 random seeds. 
Initialization of the latent states was important for a fair comparison. 
To initialize the latent states, for NAS-X and NASMC, we simply sampled from the learned proposal at time $t=0$. 
To initialize the latent state for Laplace EM, we sampled from a Gaussian distribution with the learned dynamics variance at $t=0$.

\paragraph{Inference Comparison}
In the top panel of Figure~\ref{fig:exp:nascar_inference}, we compare NAS-X and NASMC on inference in the SLDS model. 
We report (average) posterior parameter recovery for the continuous and discrete latent states across 5 random samples from the generative model.
NAS-X systematically recovers better estimates of both the discrete and continuous latent states.

\section{Inference in Squid Giant Axon Model}
\label{sec:hh}
\subsection{HH Model Definition}
For the inference experiments (Section \ref{sec:hh-inf}) we used a probabilistic version of the squid giant axon model \cite{hodgkin1952quantitative, dayan2005theoretical}. Our experimental setup was constructed to broadly match \cite{lawson2022sixo}, and used a single-compartment model with dynamics defined by
\begin{align}
        C_m \frac{dv}{dt} &= I_\mathrm{ext} - \overline{g}_\mathrm{Na}m^3h(v - E_\mathrm{Na}) - \bar{g}_\mathrm{K} n^4 (v - E_\mathrm{K}) - g_\mathrm{leak}(v - E_\mathrm{leak})\\
        \frac{dm}{dt} &= \alpha_m (v) (1 - m) - \beta_m (v) m \\
        \frac{dh}{dt} &= \alpha_h (v) (1 - h) - \beta_h (v) h \\
        \frac{dn}{dt} &= \alpha_n (v) (1 - n) - \beta_n (v) n
\end{align}
where $C_m$ is the membrane capacitance; $v$ is the potential difference across the membrane; $I_\mathrm{ext}$ is the external current; $\bar{g}_\mathrm{Na}$, $\bar{g}_\mathrm{K}$, and $\bar{g}_\mathrm{leak}$ are the maximum conductances for sodium, potassium, and leak channels; $E_\mathrm{Na}$, $E_\mathrm{K}$, and $E_\mathrm{leak}$ are the reversal potentials for the sodium, potassium, and leak channels; $m$ and $h$ are subunit states for the sodium channels and $n$ is the subunit state for the potassium channels. The functions $\alpha$ and $\beta$ that define the dynamics for $n$, $m$, and $h$ are defined as
\begin{align}
\alpha_m(v) = \frac{-4 - v/10}{\exp(-4 - v/10) - 1}, \quad \beta_m(v) = 4 \cdot \exp((-65 - v) / 18)\\
\alpha_h(v) = 0.07 \cdot \exp((-65 -v)/ 20), \quad \beta_h(v) = \frac{1}{\exp(-3.5 - v / 10) + 1}\\
\alpha_n(v) = \frac{-5.5 - v/10}{\exp(-5.5 - v/10) - 1}, \quad \beta_n(v) = 0.125 \cdot \exp((-65 - v) / 80)
\end{align}
This system of ordinary differential equations defines a nonlinear dynamical system with a four-dimensional state space: the instantaneous membrane potential $v$ and the ion gate subunit states $n$, $m$, and $h$.

As in \cite{lawson2022sixo}, we use a probabilistic version of the original HH model that adds zero-mean Gaussian noise to both the membrane voltage $v$ and the ``unconstrained'' subunit states. The observations are produced by adding Gaussian noise with variance $\sigma^2_y$ to the membrane potential $v$.

Specifically, let $\bx_t$ be the state vector of the system at time $t$ containing $(v_t, m_t, h_t, n_t)$, and let $\varphi_{dt}(\bx)$ be a function that integrates the system of ODEs defined above for a step of length $dt$. Then the probabilistic HH model can be written as
\begin{align}
p(\bx_{1:T}, \by_{1:T}) = p(\bx_1)\prod_{t=2}^Tp(\bx_t \mid \varphi_{dt}(\bx_{t-1}))\prod_{t=1}^T\mathcal{N}(\by_t; \bx_{t,1}, \sigma^2_y)
\end{align}
where the 4-D state distributions $p(\bx_1)$ and $p(\bx_t \mid \varphi_{dt}(\bx_{t-1}))$ are defined as
\begin{align}
    p(\bx_t \mid \varphi_{dt}(\bx_{t-1})) = \mathcal{N}(\bx_{t,1}; \varphi_{dt}(\bx_{t-1})_1, \sigma^2_{x,1})\prod_{i=2}^4 \mathrm{LogitNormal}(\bx_{t,i}; \varphi_{dt}(\bx_{t-1})_i, \sigma^2_{x,i}).
\end{align}
In words, we add Gaussian noise to the voltage ($\bx_{t,1}$) and logit-normal noise to the gate states $n,m,$ and $h$. The logit-normal is defined as the distribution of a random variable whose logit has a Gaussian distribution, or equivalently it is a Gaussian transformed by the sigmoid function and renormalized. We chose the logit-normal because its values are bounded between 0 and 1, which is necessary for the gate states.

\paragraph{Problem Setting} 
For the inference experiments we sampled 10,000 noisy voltage traces from a fixed model and used each method to train proposals (and possibly twists) to compute the marginal likelihood assigned to the data under the true model.

As in \cite{lawson2022sixo}, we sampled trajectories of length 50 milliseconds, with a single noisy voltage observation every millisecond. The stability of our ODE integrator allowed us to integrate at $dt=0.1$ms, meaning that there were 10 latent states per observation.

\paragraph{Proposal and Twist Details} Each proposal was parameterized using the combination of a bidirectional recurrent neural network (RNN) that conditioned on all observed noisy voltages as well as a dense network that conditioned on the RNN hidden state and the previous latent state $\bx_{t-1}$ \citep{hochreiter1997long,jordan1997serial}. The twists for SIXO and NAS-X were parameterized using an RNN run in reverse over the observations combined with a dense network that conditioned on the reverse RNN hidden state and the latent being `twisted', $\bx_t$. Both the proposal and twists were learned in an amortized manner, i.e. they were shared across all trajectories. All RNNs had a single hidden layer of size 64, as did the dense networks. All models were fit with ADAM \citep{kingma2014adam} with proposal learning rate of $10^{-4}$ and twist learning rate of $10^{-3}$.

A crucial aspect of fitting the proposals was defining them in terms of a `residual' from the prior, a technique known as Res$_q$ \citep{fraccaro2016sequential}. In our setting, we defined the true proposal density as proportional to the product of a unit-variance Gaussian centered at $\varphi(\bx_t)$ and a Gaussian with parameters output from the RNN proposal.

\subsection{Experimental Results}

\begin{figure}
    \centering
    \includegraphics[scale=0.37]{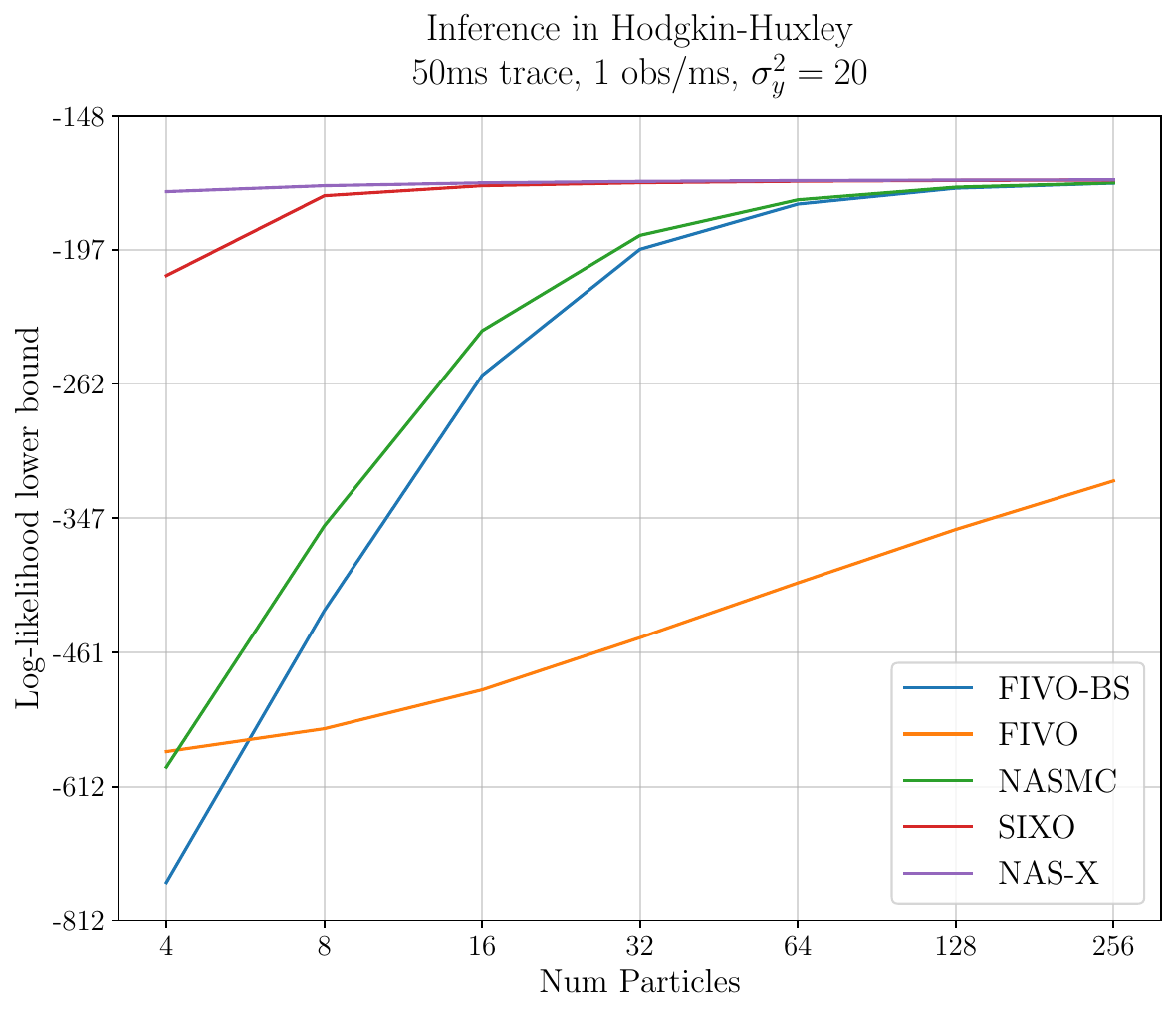}
    \includegraphics[scale=0.37]{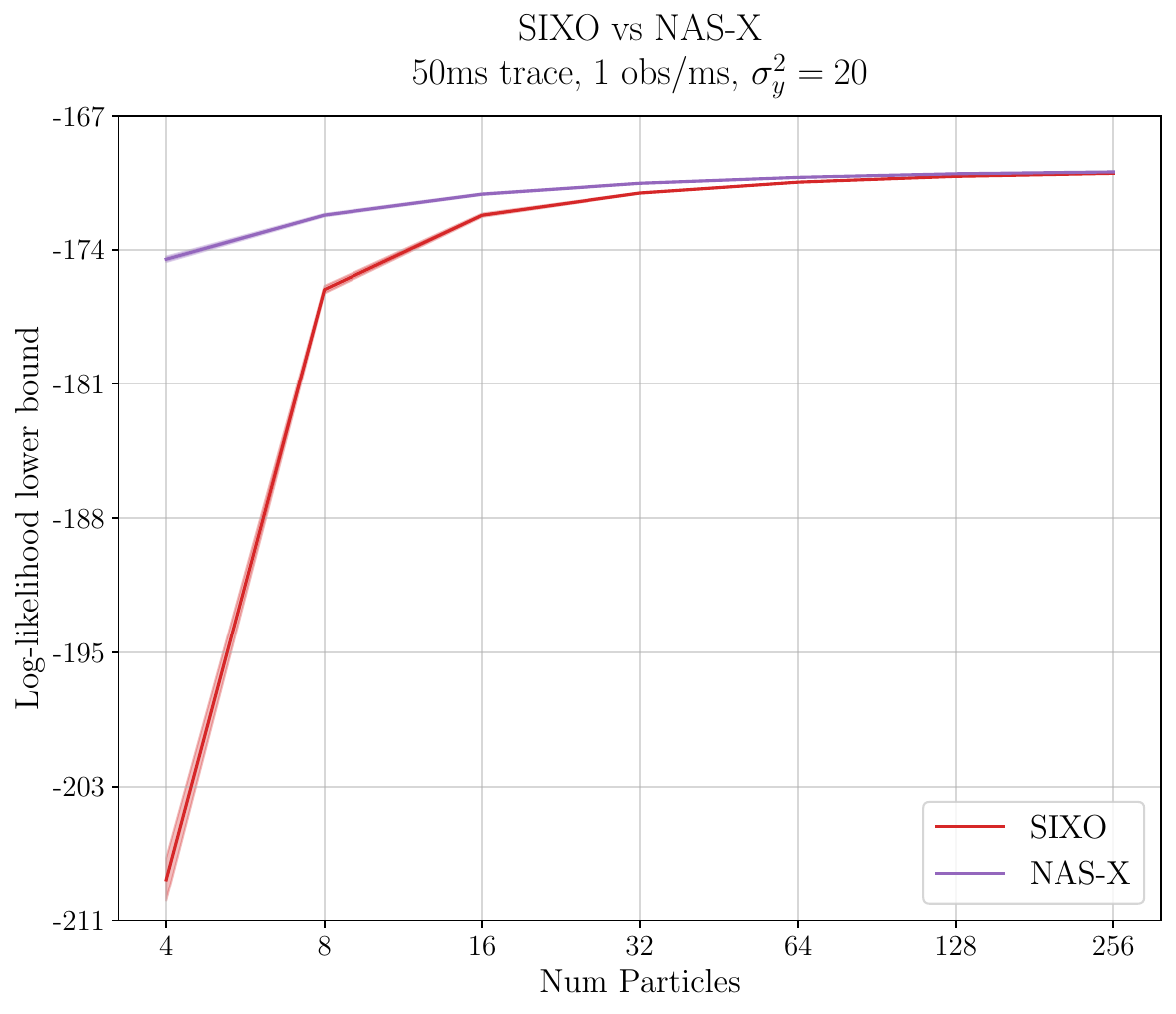}
    \caption{\textbf{HH inference performance across different numbers of particles.}\\
    \textbf{(left)} Log-likelihood lower bounds for proposals trained with 4 particles and evaluated across a range of particle numbers. NAS-X's inference performance decays only minimally as the number of particles is decreased, while all other methods experience significant performance degradation. \textbf{(right)} A comparison of SIXO and NAS-X containing the same values as the left panel, but zoomed in. NAS-X is roughly twice as particle efficient as SIXO, and outperforms SIXO by roughly 34 nats at 4 particles.
    }
    \label{fig:app:hh-inf}
\end{figure}

\begin{figure*}
\centering
\includegraphics[scale=0.64]{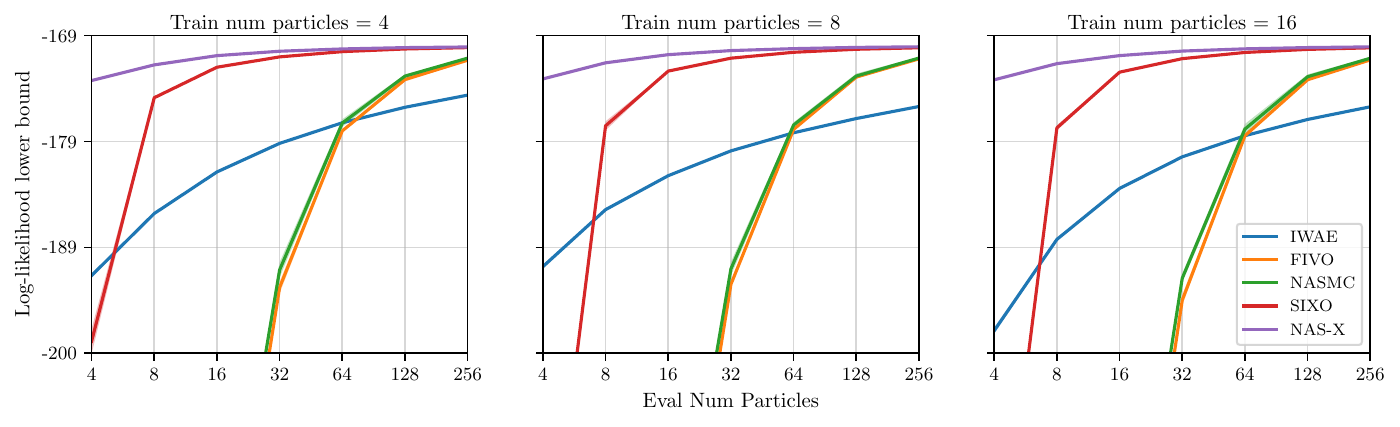}
\caption{\textbf{Training HH proposals with increasing numbers of particles.} HH proposal performance plots similar to Figure \ref{fig:app:hh-inf}, but trained with varying numbers of particles. Increasing the number of particles at training time has a negligible effect on NAS-X performance in this setting, but caused many VI-based methods to perform worse. This could be due to signal-to-noise issues in proposal gradients, as discussed in \citet{rainforth2018tighter}.}
\label{app:fig:hh-inf-multi-npart}
\end{figure*}

\begin{figure}
    \centering
    \includegraphics[scale=0.48]{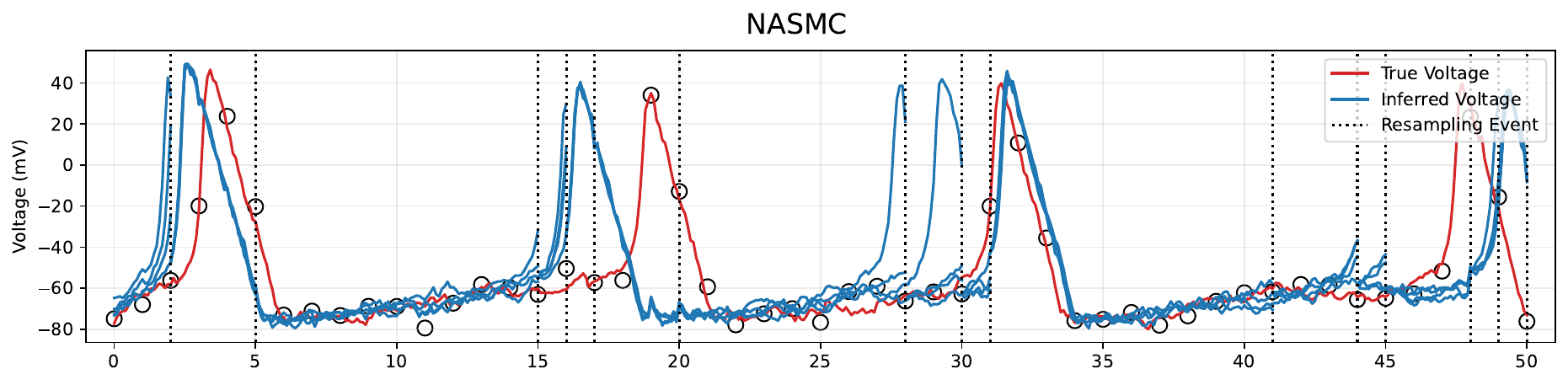}
    \includegraphics[scale=0.5]{figures/hh/smooth_sixo.pdf}
    \includegraphics[scale=0.5]{figures/hh/smooth_nasx.pdf}
    \caption{\textbf{Inferred voltage traces for NASMC, SIXO, and NAS-X.}\\
    \textbf{(top)} NASMC exhibits poor performance, incorrectly inferring the timing of most spikes. \textbf{(middle)} SIXO's inferred voltage traces are more accurate than NASMC's with only a single mistimed spike, but SIXO generates a high number of resampling events leading to particle degeneracy. \textbf{(bottom)} NAS-X perfectly infers the latent voltage with no mistimed spikes, and resamples very infrequently.}
    \label{fig:hh-voltage-traces}
\end{figure}

In Figure \ref{fig:app:hh-inf} we plot the performance of proposals and twists trained with 4 particles and evaluated across a range of particle numbers. All methods except FIVO perform roughly the same when evaluated with 256 particles, but with lower numbers of evaluation particles the smoothing methods emerge as more particle-efficient than the filtering methods. To achieve NAS-X's inference performance with 4 particles, NASMC would need 256 particles, a 64-times increase. NAS-X is also more particle-efficient than SIXO, achieving on average a 2x particle efficiency improvement. We show the effect of changing the number of training particles in Figure \ref{app:fig:hh-inf-multi-npart}.

The FIVO method with a parametric proposal drastically underperformed all smoothing methods as well as NASMC, indicating that the combination of filtering SMC and the exclusive KL divergence leads to problems optimizing the proposal parameters. To compensate, we also evaluated the performance of ``FIVO-BS", a filtering method that uses a bootstrap proposal. This method is identical to a bootstrap particle filter, i.e. it proposes from the model and has no trainable parameters. FIVO-BS far outperforms standard FIVO, and is only marginally worse than NASMC in this setting.

In Figure \ref{fig:hh-voltage-traces} we investigate these results qualitatively by examining the inferred voltage traces of each method. We see that NASMC struggles to produce accurate spike timings and generates many spurious spikes, likely because it is unable to incorporate future information into its proposal or resampling method. SIXO performs better than NASMC, accurately inferring the timing of most spikes but resampling at a high rate. High numbers of resampling events can lead to particle degeneracy and poor inferences. NAS-X is able to correctly infer the voltage across the whole trace with no suprious or mistimed spikes. Furthermore NAS-X rarely resamples, indicating it has learned a high-quality proposal that does not generate low-quality particles that must be resampled away. These qualitative results seem to support the quantitative results in Figure \ref{fig:app:hh-inf} --- SIXO's high resampling rate and NASMC's filtering approach lead to lower bound values.

\begin{table}[h]
\centering
\caption{\textbf{Train Bound comparison}}
\label{tab:train-bound}
\begin{tabular}{ccc}
\hline
\textbf{Metric} & \textbf{NAS-X} & \textbf{SIXO} \\ 
\hline
$\mathcal{L}^{256}_{\text{BPF}}$ & $-660.7003$ & $-636.2579$ \\
\hline
$\mathcal{L}^{4}_{\text{train}}$ & $-664.3528$ & $-668.6865$ \\
\hline
$\mathcal{L}^{8}_{\text{train}}$ & $-662.8712$ & $-653.6352$ \\ 
\hline
$\mathcal{L}^{16}_{\text{train}}$ & $-662.0753$ & $-644.8764$ 
\\ 
\hline
$\mathcal{L}^{32}_{\text{train}}$ & $-661.5387$ & $-639.5388$
\\ 
\hline
$\mathcal{L}^{64}_{\text{train}}$ & $-660.8040$ & $-636.5131$ 
\\ 
\hline
$\mathcal{L}^{128}_{\text{train}}$ & $-660.5102$ & $-633.7875$ 
\\ 
\hline
$\mathcal{L}^{256}_{\text{train}}$ & $-660.3423$ & $-632.1377$ 
\\ 
\hline
\end{tabular}
\end{table}

\section{Model Learning in Mouse Pyramidal Neuron Model}
\label{app:sec:mouse-model-learning}

\subsection{Model Definition}
For the model learning experiments in Section \ref{sec:hh-model-learning} we used a generalization of the Hodgkin-Huxley model developed for modeling mouse visual cortex neurons by the Allen Institute for Brain Science \cite{wang2020allen,aibaPerisomatic}. Specifically we used the perisomatic model with ID 482657528 developed to model cell 480169178. The model is detailed in the whitepaper \cite{aibaPerisomatic} and the accompanying code, but we reproduce the details here to ensure our work is self-contained.

Similar to the squid giant axon model, the mouse visual cortex model is composed of ion channels that affect the current flowing in and out of the cell. Let $\mathcal{I}$ be the set of ions $\{\mathrm{Na}^+, \mathrm{Ca}^{2+}, \mathrm{K}^+\}$. Each ion has associated with it
\begin{enumerate}
    \item A set of channels that transport that ion, denoted $C_i$ for $i \in \mathcal{I}$.
    \item A reversal potential, $E_i$.
    \item An instantaneous current density, $I_i$, which is computed by summing the current density flowing through each channel that transports that ion.
\end{enumerate}
Correspondingly, let $\mathcal{C}$ be the set of all ion channels so that $\mathcal{C} = \bigcup_{i \in \mathcal{I}} C_i$. Each $c \in \mathcal{C}$ has associated with it
\begin{enumerate}
  \item A maximum conductance density, $\overline{g}_c$.
      \item A set of subunit states, referred to collectively as the vector $\lambda_c$. Let $\lambda_c \in [0,1]^{d_c}$, i.e. $\lambda_c$ is a $d_c$-dimensional vector of values in the interval $[0,1]$.
    \item A function $g_c$ that combines the gate values to produce a number in $[0,1]$ that weights the maximum conductance density, $\overline{g}_c\cdot g_c(\lambda_c)$.
    \item Functions $A_c(\cdot)$ and $b_c(\cdot)$ which compute the matrix and vector used in the ODE describing $\lambda_c$ dynamics. $A_c$ and $b_c$ are functions of both the current membrane voltage $v$ and calcium concentration inside the cell $[\mathrm{Ca}^{2+}]_i$. If the number of subunits (i.e. the dimensionality of $\lambda_c$) is $d_c$, then the output of $A_c(v,[\mathrm{Ca}^{2+}]_i)$ is a $d_c \times d_c$ diagonal matrix and the output of $b_c(v, [\mathrm{Ca}^{2+}]_i)$ is a $d_c$-dimensional vector.
\end{enumerate}
With this notation we can write the system of ODEs
\begin{align}
    C_m\frac{dv}{dt} &= \frac{I_\mathrm{ext}}{SA} - g_\mathrm{leak}(v - E_\mathrm{leak}) - \sum_{i \in \mathrm{ions}} I_\mathrm{i} \\
I_\mathrm{i} &= \sum_{c \in \mathrm{C_i}} \overline{g}_c g_c(\lambda_c)(v - E_\mathrm{i})\\
\frac{d\lambda_c}{dt} &= A_c(v,[\mathrm{Ca}^{2+}]_i)\lambda_c + b_c(v,[\mathrm{Ca}^{2+}]_i) \quad \forall c \in \mathcal{C}\\
\frac{d[\mathrm{Ca}^{2+}]_i}{dt}  &= -kI_{\mathrm{Ca}^{2+}} - \frac{[\mathrm{Ca}^{2+}]_i - [\mathrm{Ca}^{2+}]_{\mathrm{min}}}{\tau}.
\end{align}
Most symbols are as described earlier, $SA$ is the membrane surface area of the neuron, $[\mathrm{Ca}^{2+}]_i$ is the calcium concentration inside the cell, $[\mathrm{Ca}^{2+}]_\mathrm{min}$ is the minimum interior calcium concentration with a value of 1 nanomolar, $\tau$ is the rate of removal of calcium with a value of 80 milliseconds, and $k$ and is a constant with value
\begin{align}
    k = 10000 \cdot\frac{\gamma}{2\cdot F \cdot \mathrm{depth}}
\end{align}
where 10000 is a dimensional constant, $\gamma$ is the percent of unbuffered free calcium, $F$ is Faraday's constant, and $\mathrm{depth}$ is the depth of the calcium buffer with a value of 0.1 microns.

Because the concentration of calcium changes over time, this model calculates the reversal potential for calcium $E_{\mathrm{Ca}^{2+}}$ using the Nernst equation
\begin{align}
E_{\mathrm{Ca}^{2+}} = \frac{G \cdot T}{2 \cdot F }\log\left(\frac{[\mathrm{Ca}^{2+}]_o}{[\mathrm{Ca}^{2+}]_i} \right)
\end{align}
where $G$ is the gas constant, $T$ is the temperature in Kelvin (308.15$^{\circ}$), $F$ is Faraday's constant, and $[\mathrm{Ca}^{2+}]_o$ is the extracellular calcium ion concentration which was set to 2 millimolar.

\paragraph{Probabilistic Model} The probabilistic version of the deterministic ODEs was constructed similarly to the probabilistic squid giant axon model --- Gaussian noise was added to the voltage and unconstrained gate states. One difference is that the system state now includes $[\mathrm{Ca}^{2+}]_i$ which is constrained to be greater than 0. To noise $[\mathrm{Ca}^{2+}]_i$ we added Gaussian noise in the log space, analagous to the logit-space noise for the gate states.

\paragraph{Model Size}
The 38 learnable parameters of the model include:
\begin{enumerate}
    \item Conductances $\overline{g}$ for all ion channels (10 parameters).
    \item Reversal potentials of sodium, potassium, and the non-specific cation: $E_{\mathrm{K}^+}$, $E_{\mathrm{Na}^{+}}$, and $E_\mathrm{\mathrm{NSC}^+}$.
    \item The membrane surface area and specific capacitance.
    \item Leak channel reversal potential and max conductance density.
    \item The calcium decay rate and free calcium percent.
    \item Gaussian noise variances for the voltage $v$ and interior calcium concentration $[\mathrm{Ca}^{2+}]_i$.
    \item Gaussian noise variances for all subunit states (16 parameters).
    \item Observation noise variance.
\end{enumerate}

The 18-dimensional state includes:
\begin{enumerate}
    \item Voltage $v$
    \item Interior calcium concentration $[\mathrm{Ca}^{2+}]_i$
    \item All subunit states (16 dimensions)
\end{enumerate}

\subsection{Channel Definitions}
In this section we provide a list of all ion channels used in the model. In the following equations we often use the function $\mathrm{exprel}$ which is defined as
\begin{align}
    \mathrm{exprel}(x) = \begin{cases}
        1 \hspace{19mm} \mathrm{if} \quad x = 0\\
        \dfrac{\exp(x) - 1}{x} \quad \mathrm{otherwise}
    \end{cases}
\end{align}
A numerically stable implementation of this function was critical to training our models.

Additionally, many of the channel equations below contain a `temperature correction' $q_t$ that adjusts for the fact that the original experiments and Allen Institute experiments were not done at the same temperature. In those equations, $T$ is the temperature in Celsius which was 35$^\circ$.

\subsubsection{Transient \texorpdfstring{Na\textsuperscript{+}}{Na+}}
From \citet{colbert2002ion}.
\begin{align*}
\lambda_c &= (m, h), \quad g_c(\lambda_c) = m^3h\\
\frac{1}{q_t}\frac{dm}{dt
} &= \alpha_m(v)(1 - m) - \beta_m(v) m\\ 
\frac{1}{q_t}\frac{dh}{dt} &= \alpha_h(v)(1-h) - \beta_h(v)h\\
q_t &= 2.3^{\left(\frac{T - 23}{10}\right)}
\end{align*}
\begin{align*}
\alpha_m(v) = \frac{0.182 \cdot 6}{\mathrm{exprel}(-(v + 40)/6)},& \quad \beta_m(v) = \frac{0.124 \cdot 6}{\mathrm{exprel}((v + 40)/6)} \\
\alpha_h(v) = \frac{0.015 \cdot 6}{\mathrm{exprel}((v + 66)/6)},& \quad \beta_h(v) = \frac{0.015 \cdot 6}{\mathrm{exprel}(-(v + 66)/6)}
\end{align*}

\subsubsection{Persistent \texorpdfstring{Na\textsuperscript{+}}{Na+}} 
From \citet{magistretti1999biophysical}.
\begin{align*}
  \lambda_c &= h, \quad g_c(\lambda_c) = m_\infty h\\
  m_\infty &= \frac{1}{1 + \exp(-(v + 52.6) / 4.6)}\\
  \frac{1}{q_t}\frac{dh}{dt} &= \alpha_h(v)(1-h) - \beta_h(v)h\\
  q_t &= 2.3^{\left(\frac{T-21}{10}\right)}
\end{align*}
\begin{align*}
    \alpha_h(v) = \frac{2.88\times 10^{-6} \cdot 4.63}{\mathrm{exprel}((v+17.013)/4.63)},& \quad \beta_h(v) =  \frac{6.94\times 10^{-6} \cdot 2.63}{\mathrm{exprel}(-(v + 64.4)/2.63)}
\end{align*}

\subsubsection{Hyperpolarization-activated cation conductance}
From \citet{kole2006single}. This channel uses a `nonspecific cation current' meaning it can transport any cation. In practice, this is modeled by giving it its own special ion $\mathrm{NSC}^+$ with 
resting potential $E_{\mathrm{NSC}^+}$.
\begin{align*}
  \lambda_c &= m, \quad g_c(\lambda_c) = m\\
  E_{\mathrm{NSC}^+} &= -45.0\\
  \frac{dm}{dt} &= \alpha_m(v)(1-m) - \beta_m(v)m
\end{align*}
\begin{align*}
\alpha_m(v) = \frac{0.001 \cdot 6.43 \cdot 11.9}{\mathrm{exprel}((v + 154.9)/11.9)},& \quad \beta_m(v) = 0.001 \cdot 193 \cdot \exp(v/33.1)
\end{align*}

\subsubsection{High-voltage-activated \texorpdfstring{Ca\textsuperscript{2+}}{Ca 2+} conductance}
From \citet{reuveni1993stepwise}
\begin{align*}
  \lambda_c &= (m, h), \quad g_c(\lambda_c) = m^2h\\
  \frac{dm}{dt} &= \alpha_m(v)(1-m) - \beta_m(v)m\\
  \frac{dh}{dt} &= \alpha_h(v)(1-h) - \beta_h(v)h
\end{align*}
\begin{align*}
    \alpha_m(v) = \frac{0.055 \cdot 3.8}{\mathrm{exprel}(-(v + 27)/3.8)},& \quad \beta_m(v) = 0.94 \cdot \exp(-(v + 75)/17)\\
    \alpha_h(v) = 0.000457 \cdot \exp(-(v+13)/50),& \quad \beta_h(v) = \frac{0.0065}{\exp(-(v+15)/28)+1}
\end{align*}

\subsubsection{Low-voltage-activated \texorpdfstring{Ca\textsuperscript{2+}}{Ca 2+} conductance}
From \citet{avery1996multiple}, \citet{randall1997contrasting}.
\begin{align*}
  \lambda_c &= (m, h), \quad g_c(\lambda_c) = m^2h\\
  \frac{1}{q_t}\frac{dm}{dt} &= \frac{m_\infty - m}{m_\tau}\\
  \frac{1}{q_t}\frac{dh}{dt} &= \frac{h_\infty - h}{h_\tau}\\
  q_t =& 2.3^{(T-21)/10}
\end{align*}
\begin{align*}
m_\infty = \frac{1}{1+\exp(-(v + 40)/6)},& \quad m_\tau = 5 + \frac{20}{1+\exp((v+35)/5)}\\
h_\infty = \frac{1}{1+\exp((v + 90)/6.4)},& \quad h_\tau = 20 + \frac{50}{1+\exp((v+50)/7)}
\end{align*}

\subsubsection{M-type (Kv7) \texorpdfstring{K\textsuperscript{+}}{K+} conductance}
From \citet{adams1982m}.
\begin{align*}
  \lambda_c &= m, \quad g_c(\lambda_c) = m\\
  \frac{1}{q_t}\frac{dm}{dt} &= \alpha_m(v)(1-m) - \beta_m(v)m\\
  q_t &= 2.3^{\left(\frac{T-21}{10}\right)}
\end{align*}
\begin{align*}
    \alpha_m(v) = 0.0033\exp(0.1(v+35)),& \quad \beta_m(v) = 0.0033 \cdot \exp(-0.1(v+35))
\end{align*}

\subsubsection{Kv3-like \texorpdfstring{K\textsuperscript{+}}{K+} conductance}
\begin{align*}
  \lambda_c &= m, \quad g_c(\lambda_c) = m\\
  \frac{dm}{dt} &= \frac{m_\infty - m}{m_\tau}
\end{align*}
\begin{align*}
    m_\infty = \frac{1}{1+\exp(-(v-18.7)/9.7)},& \quad m_\tau = \frac{4}{1+\exp(-(v+46.56)/44.14)}
\end{align*}

\subsubsection{Fast inactivating (transient, Kv4-like) \texorpdfstring{K\textsuperscript{+}}{K+} conductance}
From \citet{korngreen2000voltage}.
\begin{align*}
  \lambda_c &= (m, h), \quad g_c(\lambda_c) = m^4h\\
  \frac{1}{q_t}\frac{dm}{dt} &= \frac{m_\infty - m}{m_\tau}\\
  \frac{1}{q_t}\frac{dh}{dt} &= \frac{h_\infty - h}{h_\tau}\\
  q_t =& 2.3^{(T-21)/10}
\end{align*}
\begin{align*}
    m_\infty = \frac{1}{1+\exp(-(v+47)/29)},& \quad m_\tau = 0.34 + \frac{0.92}{\exp(((v+71)/59)^2)}\\
    h_\infty = \frac{1}{1+\exp((v+66)/10)},& \quad h_\tau = 8 + \frac{49}{\exp(((v+73)/23)^2)}\\
    \overline{g} &= 1\times 10^{-5}
\end{align*}

\subsubsection{Slow inactivating (persistent) \texorpdfstring{K\textsuperscript{+}}{K+} conductance}
From \citet{korngreen2000voltage}.
\begin{align*}
  \lambda_c &= (m, h), \quad g_c(\lambda_c) = m^2h\\
  \frac{1}{q_t}\frac{dm}{dt} &= \frac{m_\infty - m}{m_\tau}\\
  \frac{1}{q_t}\frac{dh}{dt} &= \frac{h_\infty - h}{h_\tau}\\
  q_t =& 2.3^{(T-21)/10}
\end{align*}
\begin{align*}
m_\infty =& \frac{1}{1+\exp(-(v + 14.3)/14.6)}\\
\quad m_\tau =& 
\begin{cases}
1.25 + 175.03 \cdot e^{0.026v}, & \text{if } v < -50 \\
1.25 + 13 \cdot e^{-0.026v}, & \text{if } v \geq -50
\end{cases}\\
h_\infty =& \frac{1}{1+\exp((v+54)/11)}\\
h_\tau =& \frac{24v + 2690}{\exp(((v+75)/48)^2)}\\
\overline{g} &= 1\times 10^{-5}
\end{align*}

\subsubsection{SK-type calcium-activated \texorpdfstring{K\textsuperscript{+}}{K+} conductance}
From \citet{kohler1996small}. Note this is the only calcium-gated ion channel in the model.
\begin{align*}
  \lambda_c &= z, \quad g_c(\lambda_c) = z\\
  \frac{dz}{dt} &= \frac{z_\infty - z}{z_\tau}
\end{align*}
\begin{align*}
z_\infty = \frac{1}{1 + (0.00043 / [\mathrm{Ca}^{2+}]_i)^{4.8}}, \quad z_\tau = 1
\end{align*}

\subsection{Training Details}

\paragraph{Dataset}
The dataset used to fit the model was a subset of the stimulus/response pairs available from the Allen Institute. First, all stimuli and responses were downloaded for cell 480169178. Then, sections of length 200 milliseconds were extracted from a subset of the stimuli types. The stimuli types and sections were chosen so that the neuron was at rest and unstimulated at the beginning of the trace.
We list the exclusion criteria below.
\begin{enumerate}
    \item Any ``Hold'' stimuli: Excluded because these traces were collected under voltage clamp conditions which we did not model.
    \item Test: Excluded because the stimulus is 0 mV for the entire trace.
    \item Ramp/Ramp to Rheobase: Excluded because the cell is only at rest at the very beginning of the trace.
    \item Short Square: 250 ms to 450 ms.
    \item Short Square --- Triple: 1250 ms to 1450 ms.
    \item Noise 1 and Noise 2: 1250 ms to 1450 ms, 9250 ms to 9450 ms, 17250 ms to 17450 ms.
    \item Long Square: 250 ms to 450 ms.
    \item Square --- 0.5ms Subthreshold: The entire trace.
    \item Square --- 2s Suprathreshold: 250 ms to 450 ms.
    \item All others: Excluded.
\end{enumerate}

For cell 480169178, the criteria above selected 95 stimulus/response pairs of 200 milliseconds each. Each trace pair was then downsampled to 1 ms (from the original 0.005 ms per step) and corrupted with mean-zero Gaussian noise of variance 20 mV$^2$ to simulate voltage imaging conditions. Finally, the 95 traces were randomly split into 72 training traces and 23 test traces.

\paragraph{Proposal and Twist} The proposal and twist hyperparameters were broadly similar to the squid axon experiments, with the proposal being parameterized by a bidirectional RNN with a single hidden layer of size 64 and an MLP with a single hidden layer of size 64. The RNN was conditioned on the observed response and stimulus voltages at each timestep, and the MLP accepted the RNN hidden state, the previous latent state, and a transformer positional encoding of the number of steps since the last voltage response observation. The twist was similarly parameterized using an RNN run in reverse across the stimulus and response, combined with an MLP that accepted the RNN hidden state, the latent state being evaluated, and a transformer positional encoding of the number of steps elapsed since the last voltage response observation. The positional encodings were used to inform the twist and proposal of the number of steps elapsed since the last observation because the model was integrated with a stepsize of 0.1ms while observations were received once every millisecond.

\paragraph{Hyperparameter Sweeps}

To evaluate the methods we swept across the parameters
\begin{enumerate}
  \item Initial observation variance: $e^2, e^3, e^5$
  \item Initial voltage dynamics variance: $e, e^2, e^3$
  \item Bias added to scales produced by the proposal: $e^2, e^5$
\end{enumerate}
We also evaluated the models across three different data noise variances (20, 10, and 5) but the results were similar for all values, so we reported only the results for variance 20. This amounted to $3\cdot 3 \cdot 3 \cdot 2$ different hyperparameter settings, and 5 seeds were run for each setting yielding a total of 270 runs.

When computing final performance, a hyperparameter setting was only evaluated if it had at least 3 runs that achieved 250,000 steps without NaN-ing out. For each hyperparameter setting selected for evaluation, all successful seeds were evaluated using early stopping on the train 4-particle log likelihood lower bound.

\section{Strang Splitting for Hodgkin-Huxley Models}
\label{app:sec:strang-splitting}
Because the Hodgkin-Huxley model is a \textit{stiff} ODE, integrating it can be a challenge, especially at large step sizes. The traditional solution is to use an implicit integration scheme with varying step size, allowing the algorithm to take large steps when the voltage is not spiking. However, because our model adds noise to the ODE state at each timestep adaptive step-size methods are not viable as the different stepsizes would change the noise distribution.

Instead, we sought an explicit, fixed step-size method that could be stably integrated at relatively large stepsizes. Inspired by \citet{chen2020structure}, we developed a splitting approach that exploits the conditional linearity of the system. The system of ODEs describing the model can be split into two subsystems of linear first-order ODEs when conditioned on the state of the other subsystem. Specifically, the dynamics of the channel subunit states $\{\lambda_c \mid c \in \mathcal{C}\}$ is a system of linear first-order ODEs when conditioned on the voltage $v$ and interior calcium concentration $[\mathrm{Ca}^{2+}]_i$. Similarly, the dynamics for $v$ and $ [\mathrm{Ca}^{2+}]_i$ is a system of linear first-order ODEs when conditioned on the subunit states.

Because the conditional dynamics of each subsystem are linear first-order ODEs, an exact solution to each subsystem is possible under the assumption that the states being conditioned on are constant for the duration of the step. Our integration approach uses these exact updates in an alternating fashion, first performing an exact update to the voltage and interior calcium concentration while holding the subunit states constant, and then performing an exact update to the subunit states while holding the voltage and interior calcium concentration constant. For details on Strang and other splitting methods applied to Hodgkin-Huxley type ODEs, see \cite{chen2020structure}.

\section{Robustness of Twist Learning}
\begin{figure*}
\centering
\includegraphics[scale=0.74]{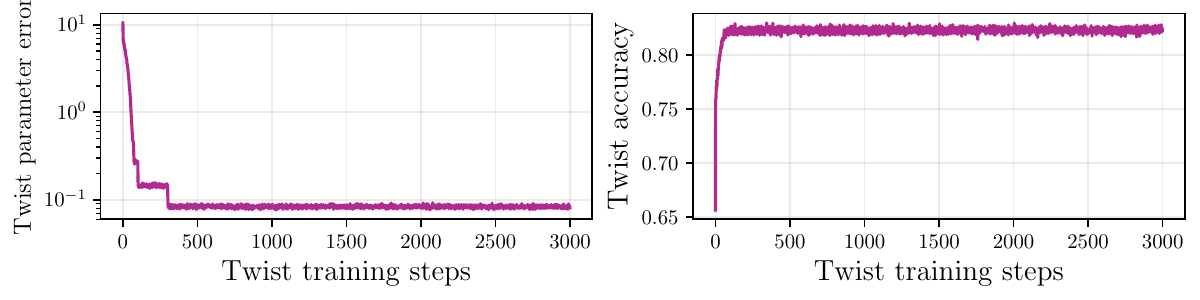}
\caption{\textbf{Twist learning in LG-SSM.} \textbf{(left)} Twist parameter error relative to optimal twist parameters for LG-SSM task; \textbf{(right)} Classification accuracy of learned twist.
With an appropriate twist parameterization, twist learning via density ratio estimation is robust.
}
\label{fig:exp:twists}
\end{figure*}

NAS-X uses SIXO's twist learning framework to approximate the smoothing distributions. 
The twist learning approaches involves density ratio estimation.
In brief, the density ratio estimation procedure involves training a classifier to distinguish between samples from 
$p_{\btheta}(\bx_t \mid \by_{t+1:T})$ and $p_{\btheta}(\bx_t)$. 
These samples can be obtained from the generative model. 
For details see  
Section~\ref{sec:twist-learning}. 
In principle, incorporating twists complicates the overall learning problem and traditional methods for twist learning can indeed be challenging. 
However, in practice, twist learning using the SIXO framework is robust and easy. 
In Figure \ref{fig:exp:twists}, we present twist parameter recovery and classification accuracy for the Gaussian SSM experiments (Section~\ref{sec:exp:lgssm}); 
in this setting, the optimal twists have a known parametric form. 
The optimal twist parameters are recovered quickly, the classification accuracy is high, and training is stable. 
This suggests that, with an appropriate twist parameterization, twist learning via density ratio estimation is tractable and straightforward.

\section{Computational Complexity and Wall-clock Time}
\label{app:sec:speed}
Theoretically, all multi-particle methods considered (NAS-X, SIXO, FIVO, NASMC, RWS, IWAE) have $O(KT)$ time complexity, where $K$ is the number of particles and $T$ is the number of time steps. Once concern is that the resampling operation in SMC could require super-linear time in the number of particles, but drawing $K$ samples from a $K$-category discrete distribution can be accomplished in $O(K)$ time using the alias method \cite{walker1974new,kronmal1979alias}. Additionally, for NAS-X, evaluating the twists is amortized across timesteps as in \citet{lawson2022sixo}, giving time linear in $T$.

NAS-X and SIXO have similar wall-clock speeds but are slower than FIVO and NASMC, primarily because of twist training, see Table \ref{app:table:speed}. Even if FIVO and NASMC were run with more particles to equalize wall-clock times, they would still far underperform NAS-X in log marginal likelihood lower bounds, see Figure \ref{fig:app:hh-inf}.

\begin{table}
  \centering
    \begin{tabular}{lccc}
    \hline
    Method & ms / global step & ms / proposal step & ms / twist step \\
    \hline
    IWAE & \(70.3 \pm 15.9\) & \(70.3 \pm 15.9\) & N/A \\
    RWS & \(71.6 \pm 8.2\) & \(71.6 \pm 8.2\) & N/A \\
    NASMC & \(53.9 \pm 6.6\) & \(53.9 \pm 6.6\) & N/A \\
    FIVO & \(51.4 \pm 13.3\) & \(51.4 \pm 13.3\) & N/A \\
    NAS-X & \(163.2 \pm 39.8\) & \(73.5 \pm 18.5\) & \(89.7 \pm 21.3\) \\
    SIXO & \(175.3 \pm 42.4\) & \(85.6 \pm 21.1\) & \(89.7 \pm 21.3\)\\
    \hline\\
    \end{tabular}
    \caption{\textbf{Wall-clock speeds of various methods during HH inference.}}
    \label{app:table:speed}
\end{table}

Specifically, SIXO and NAS-X take ~3.5x longer per step than NASMC and FIVO and 2.5x longer per step than RWS and IWAE. However, Figure \ref{fig:app:hh-inf} shows that FIVO, NASMC, IWAE, and RWS cannot match NAS-X’s performance even with 64 times more computation (4 vs. 256 particles). SIXO only matches NAS-X’s performance with 4x as many particles (4 vs. 16 particles). Therefore, NAS-X uses computational resources much more effectively than other methods.

\section{Empirical analysis of bias and variance of gradients}\label{sec:app-gradients-analysis}
\begin{figure*}
\centering
 \centering
    \includegraphics[scale=0.33]{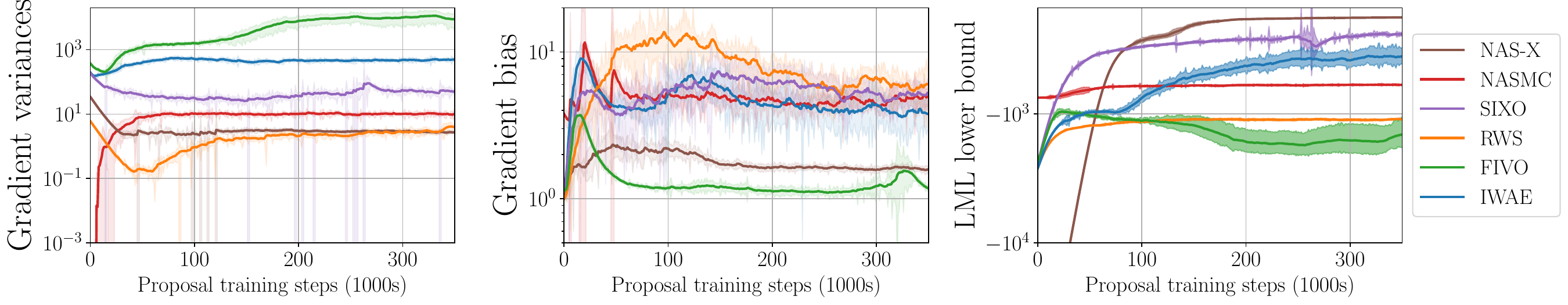} 
    \caption{Hodgkin-Huxley gradient variances  \textbf{(left)} gradient bias  \textbf{(middle)}, and log-marginal likelihood lower bounds \textbf{(right)} over training.}    
\label{fig:exp:gradient_bias}
    \vspace{-6mm}
\end{figure*}
In Figure~\ref{fig:exp:gradient_bias}, we analyze the gradient variance and bias for the Hodgkin-Huxley experiments, supplementing our theoretical analyses in Section~\ref{sec:methods:theory}. Figure ~\ref{fig:exp:gradient_bias} (left) shows NAS-X attains lower variance gradient estimates than IWAE, FIVO, and SIXO with comparable variance to RWS. 
We also studied the bias (middle) by approximating the true gradient by running a bootstrap particle filter with 256 particles using the best proposal from the inference experiments. NAS-X's gradients are lower bias than all methods but FIVO, but FIVO's gradients are also the highest variance. We hypothesize that FIVO’s gradients appear less biased because its parameters are pushed towards degenerate values where gradient estimation is “easier”. We illustrate this in the right panel, where we plot log-marginal likelihood bounds.

\end{document}